\documentclass{article}

\usepackage{microtype}
\usepackage{graphicx}
\usepackage{subcaption}
\usepackage{booktabs} 

\usepackage{hyperref}

\usepackage[accepted]{icml2026}
 
\usepackage{amsmath}
\usepackage{amssymb}
\usepackage{mathtools}
\usepackage{amsthm}
\usepackage{multirow}
\usepackage[table]{xcolor}
\definecolor{iceblue}{RGB}{230,240,255}
\usepackage[capitalize,noabbrev]{cleveref}
\usepackage{tcolorbox}
\usepackage{tabularx}
\usepackage{booktabs}
\usepackage{subcaption}
\usepackage{graphicx}

\theoremstyle{plain}
\newtheorem{theorem}{Theorem}[section]

\newtheorem{lemma}[theorem]{Lemma}

\theoremstyle{definition}
\newtheorem{definition}[theorem]{Definition}
\newtheorem{assumption}[theorem]{Assumption}
\theoremstyle{remark}
\newtheorem{remark}[theorem]{Remark}

\usepackage[textsize=tiny]{todonotes}

\icmltitlerunning{Evaluating and Understanding LLM Reasoning via Geometric Progress and Stability}

\begin{document}

\twocolumn[
  \icmltitle{Beyond Scalars: Evaluating and Understanding LLM Reasoning via Geometric Progress and Stability}



  \icmlsetsymbol{equal}{*}

\begin{icmlauthorlist}
    \icmlauthor{Xinyan Jiang}{mbzuai,kaust,sari,ucas}
    \icmlauthor{Ninghao Liu}{polyu}
    \icmlauthor{Di Wang}{kaust}
    \icmlauthor{Lijie Hu}{mbzuai}
  \end{icmlauthorlist}

  \icmlaffiliation{mbzuai}{Mohamed bin Zayed University of Artificial Intelligence (MBZUAI)}
  \icmlaffiliation{kaust}{King Abdullah University of Science and Technology}
  \icmlaffiliation{sari}{Shanghai Advanced Research Institute, Chinese Academy of Sciences, Shanghai, China}
  \icmlaffiliation{ucas}{University of Chinese Academy of Sciences, Beijing, China}
  \icmlaffiliation{polyu}{Department of Computing, Hong Kong Polytechnic University, Hong Kong, China}
  \icmlcorrespondingauthor{Lijie Hu}{lijie.hu@mbzuai.ac.ae}

  \icmlkeywords{Machine Learning, ICML}

  \vskip 0.3in
]



\printAffiliationsAndNotice{}  

\begin{abstract}


Evaluating LLM reliability via scalar probabilities often fails to capture the structural dynamics of reasoning. We introduce \textbf{TRACED}, a framework that assesses reasoning quality through theoretically grounded geometric kinematics. By decomposing reasoning traces into Progress (displacement) and Stability (curvature), we reveal a distinct topological divergence: correct reasoning manifests as high-progress, stable trajectories, whereas hallucinations are characterized by low-progress, unstable patterns (stalled displacement with high curvature fluctuations). Leveraging these signatures, our probabilistic framework achieves competitive performance and superior robustness across diverse benchmarks. Crucially, TRACED bridges geometry and cognition by mapping high curvature to ''Hesitation Loops'' and displacement to ''Certainty Accumulation'', offering a physical lens to decode the internal dynamics of machine thought.

\end{abstract}

\section{Introduction}

Large Language Models (LLMs) have demonstrated remarkable capabilities in complex reasoning, particularly through the generation of multi-step Chain-of-Thought (CoT) \citep{guo2025deepseek,team2025qwq,abdin2025phi,yang2025qwen3}. However, despite these advances, the reasoning process exhibits significant instability; models frequently suffer from hallucinations and logical fallacies, generating plausible-sounding but fundamentally incorrect derivations \citep{turpin2023language,shojaee2025illusion,huang2025survey}. Consequently, the ability to accurately assess the quality of a reasoning process, distinguishing valid deductions from confident fabrications, has become a critical challenge for reliable model deployment \citep{wu2024ocean,nguyen2024direct,chen2025reasoning,yang2025investigating,hu2025monica}. 

\vspace{-0.2cm}
Existing reasoning evaluations bifurcate into two paradigms: External Assessment relying on supervision \citep{li2022making,he2025can,li2023making} and Internal Assessment utilizing intrinsic statistics \citep{xiong2023can,marjanovic2025deepseek,wang2024latent,yang2026automonitor,yang2025understanding}. External Assessment typically employs auxiliary verifiers or annotations \citep{zhang-etal-2025-icr,zhang2024small,gandhi2025cognitive}. 
While effective, their dependence on parametric supervised training or expert models precludes scalability during real-time inference \citep{sky2024androids}.
Conversely, Internal Assessment leverages label-free signals like probability or semantic entropy \citep{li2024think,farquhar2024detecting}. 
However, by reducing reasoning trajectories to static scalars, these methods discard critical temporal dynamics. Relying on point-wise aggregation (e.g., last-token probability) neglects the sequential evolution of thought, thereby missing structural signals essential for robust evaluation.
Ultimately, both paradigms neglect the underlying reasoning mechanisms \citep{zhao2025verifying,bi2025cot}, limiting their generalization and ability to distinguish justified certainty from hallucination. This leaves a gap for a framework that offers not just prediction but also a robust and transferable diagnosis.

To overcome the limitations of simple statistical metrics and provide stronger interpretability, recent research has turned to the geometry of hidden states to understand model behavior \citep{yao2025understanding,jiang2026global,kazama2026geosteer,wang2026truth,yu2026pixel}. \citet{vilas2025tracing} demonstrated that the temporal signals of the reasoning process contain rich information that can predict reasoning correctness. Complementing this, \citet{zhou2025geometry} theoretically established that reasoning behaves as a ``geometric flow'' controlled by logical structure, while \citet{manson2025curved} revealed that semantic concerns induce measurable curvature in metric-aligned spaces. Collectively, these works demonstrate that the geometric trajectory of hidden states is a structured manifestation of the reasoning process. However, current reasoning quality assessment methods fail to integrate these profound geometric insights; bridging these intrinsic geometric features with the practical challenge of reasoning quality assessment is of significant scientific value.

In this work, we introduce \textbf{T}rajectory \textbf{R}easoning \textbf{A}ssessment via \textbf{C}urvature \textbf{E}volution and \textbf{D}isplacement Dynamics (\textbf{TRACED}), a framework that assesses LLM reasoning quality through a geometric kinematics perspective.  Specifically, to address the limitation of internal methods that reduce complex thought to simple scalars, we analyze the geometric properties of reasoning traces by decomposing them into \textbf{Progress} and \textbf{Stability}. We define \textit{Progress} as the \textbf{displacement change}  of the reasoning trajectory (where displacement change indicates significant thought progress) and \textit{Stability} as the trajectory \textbf{curvature change} (where lower curvature change indicates higher thinking stability), as shown in Figure \ref{fig:motivation_geometry}. These geometric features reveal a distinct geometric divergence: correct reasoning manifests as high progress and high stability (i.e., high magnitude and low curvature change), whereas incorrect reasoning is characterized by low progress and low stability (i.e., low displacement and high curvature change). By deriving a label-calibrated contrastive subspace, this topological divergence establishes a natural distributional separation, enabling us to distinguish reasoning quality purely through latent dynamics.
To overcome the computational burden and poor generalization of external assessment, we leverage these features to construct a Bayesian probabilistic model. This model performs direct, latent dynamics evaluation of reasoning quality by exploiting the distributional separation between correct and incorrect reasoning in geometric space. Furthermore, to bridge the gap between geometry and cognitive thinking, we map these geometric features to cognitive states (e.g., Reflection, Exploration, Certainty). We mechanistically interpret high curvature change as the physical manifestation of a ``Hesitation Loop" ( an oscillation between exploration and reflection), while high displacement change reflects the accumulation of certainty as concept transitions converge toward the final answer.

Comprehensive evaluations have been conducted across four models (including Instruction-tuned LLMs \citep{team2024qwen2,grattafiori2024llama} and Large Reasoning Models (LRMs) \citep{guo2025deepseek,yang2025qwen3}) to validate the effectiveness of TRACED. We employ AUROC \citep{boyd2013area}, AUPR, and FPR@95 \citep{manning1999foundations} to evaluate the effectiveness of geometric features in distinguishing between correct and incorrect reasoning paths. Our experiments span six benchmarks in two domains: (1) \textbf{Structured Reasoning}: GSM8K \citep{cobbe2021training}, MATH \citep{hendrycks2021measuring}, TheoremQA \citep{chen2023theoremqa}, GPQA \citep{rein2024gpqa}; and (2) \textbf{Open-Ended Reasoning}: Social IQA \citep{sap-etal-2019-social}, Understanding Fables \citep{srivastava2023beyond}. Our framework demonstrates superior performance across diverse benchmarks, confirming geometric features as a reliable and robust indicator of reasoning quality.


Our contributions can be summarized as follows:
\vspace{-0.2cm}
\begin{itemize}
    \item \textbf{Geometric Decomposition:} We evaluate the quality of reasoning by leveraging theoretically grounded geometric signatures (Displacement and Curvature), establishing that valid reasoning is characterized by high-progress, stable trajectories, whereas hallucinations exhibit low-progress, unstable geometric patterns.
    \item \textbf{Latent Kinematics Assessment:} Constructs a probabilistic model leveraging geometric kinematics within a label-calibrated subspace, achieving competitive performance and superior robustness across benchmarks.
    \item \textbf{Geometric-Cognitive Correspondence:} We bridge geometry and cognition by mapping geometric features to hidden states, interpreting high curvature as "Hesitation Loops" and high displacement as "Certainty Accumulation," thereby enhancing the interpretability of the reasoning process.
\end{itemize}

\begin{figure}[t]
    \centering
    \includegraphics[width=\linewidth]{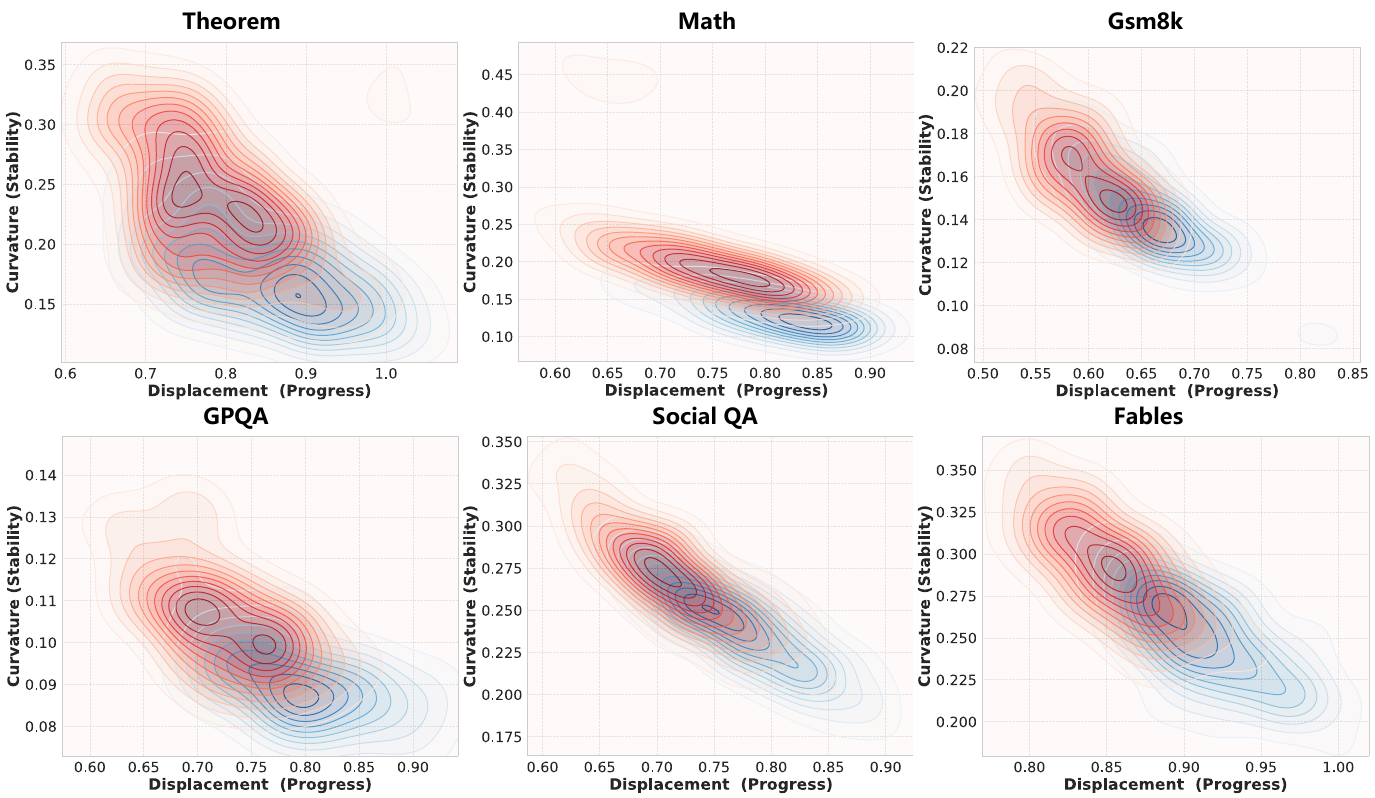}
    \caption{\textbf{Topological Divergence of Reasoning Quality.} Joint distribution of net displacement ($M$) and curvature ($K$) across Structured and Open-Ended domains. The visualization confirms a consistent separation: correct reasoning traces (blue) exhibit a high-displacement, low-curvature pattern, while incorrect chains (red)  are characterized by low-displacement stagnation and high-curvature oscillations.}
    \label{fig:motivation_geometry}
    \vspace{-0.6cm}
\end{figure}

\vspace{-0.5cm}

\section{Preliminaries}
\subsection{Reasoning as a Trajectory in Latent Space}

Formally, let a Large Language Model be denoted as a function $f: \mathcal{X} \rightarrow \mathcal{Y}$ parameterized by $\theta$. Given an input query $\mathbf{x}$, the model generates a reasoning chain (Chain-of-Thought) consisting of $T$ tokens, $\mathbf{y} = (y_1, y_2, \dots, y_T)$, followed by a final answer.

 We use the hidden state of the final layer (immediately preceding the unembedding head) at each time step $t$. Let $\mathbf{h}_t \in \mathbb{R}^d$ denote this latent representation for the $t$-th token $y_t$. Consequently, the entire reasoning process is formalized as a discrete time-series trajectory $\mathcal{H}$ :
\vspace{-0.1cm}
\begin{equation}
\mathcal{H} = \mathbf{h}_1 \xrightarrow{\dots} \mathbf{h}_2 \xrightarrow{\dots} \dots \xrightarrow{\dots} \mathbf{h}_T
\end{equation}
Here, the global geometry of $\mathcal{H}$ encodes the structural properties of the reasoning process.

\begin{algorithm}[h]
\caption{Get Reasoning Quality Space}
\label{alg:get_space}
\begin{algorithmic}[1]
\STATE \textbf{Input:} Sets of hidden state trajectories $\mathcal{D}_{\text{pos}}$ and $\mathcal{D}_{\text{neg}}$, where each sample $d_n \in \mathcal{D}$ is a sequence $\mathbf{d}_n = (\mathbf{h}_{n,1}, \dots, \mathbf{h}_{n,T})$; Induced Metric $G = W_U^\top W_U$.
\STATE \textbf{Step 1: Semantic Whitening.} Compute isotropized states for each sample $n$ using the metric square root:
\\ $\mathbf{h}'_{n,t} \leftarrow G^{1/2} \mathbf{h}_{n,t}$
\STATE \textbf{Step 2: Differential Dynamics.} Capture kinematic updates in the whitened space:
\\ $\Delta \mathbf{h}'_{n,t} \leftarrow \mathbf{h}'_{n,t} - \mathbf{h}'_{n,t-1}$
\STATE \textbf{Step 3: Sample Covariance.} Compute kinematic variance for sample $n$:
\\ $C_n \leftarrow \frac{1}{T-1} \sum_{t} \Delta \mathbf{h}'_{n,t} (\Delta \mathbf{h}'_{n,t})^\top$
\STATE \textbf{Step 4: Contrastive Aggregation.} 
\\ $C^+ \leftarrow \frac{1}{|\mathcal{D}_{\text{pos}}|} \sum_{\mathbf{d}_n \in \mathcal{D}_{\text{pos}}} C_n$
\\ $C^- \leftarrow \frac{1}{|\mathcal{D}_{\text{neg}}|} \sum_{\mathbf{d}_n \in \mathcal{D}_{\text{neg}}} C_n$
\\ $S \leftarrow C^+ - \lambda C^-$ (where $\lambda = \|C^+\|_F / \|C^-\|_F$)
\STATE \textbf{Step 5: Basis Extraction.}
\\ Eigendecompose $S \to B$ (Top-$k$ eigenvectors)
\STATE \textbf{Output:} Reasoning Quality Space Basis $B$
\end{algorithmic}
\end{algorithm}

\subsection{The Execution Manifold and Semantic Geometry}
\label{sec:Semantic}
To accurately capture the geometric dynamics of reasoning, we must define a space where geometric movement strictly corresponds to semantic evolution. 

\textbf{The Problem: Geometry-Semantics Mismatch.}
Directly measuring geometric features (e.g., displacement) in the raw hidden state space $\mathbb{R}^d$ is problematic due to the ``anisotropy" of the representation. The raw space is often dominated by high-frequency outlier dimensions or systematic noise that does not contribute to the model's actual predictions. In this uncorrected space, a large Euclidean distance does not necessarily imply a significant change in meaning. If we compute dynamics directly on $\mathbf{h}_t$, our metrics (Displacement and Curvature) would likely measure numerical noise rather than the progress of thought \cite{timkey2021bark}. 

To resolve this, we must measure the trajectory in the ``vocabulary space", the only space where the model's internal states translate into actual meaning. Following \citet{manson2025curved}, we adopt the semantic metric induced by the model's unembedding matrix $W_U$. we define the \textbf{induced metric tensor} $G = W_U^\top W_U$ to gauge the magnitude of state vectors. This allows us to measure geometric changes under the norm $||\mathbf{v}||_G = \sqrt{\mathbf{v}^\top G \mathbf{v}} = ||W_U \mathbf{v}||_2$, effectively weighting the hidden state dimensions by their impact on the vocabulary distribution. By utilizing this induced metric, we filter out non-semantic artifacts. This ensures that any measured geometrical changes reflect a genuine shift in the probability distribution over the vocabulary, providing a rigorous physical ground for our reasoning metrics.



\section{Method: TRACED}
TRACED is a framework that evaluates reasoning via geometric kinematics. It operates in three steps: (1) transforming states into a discriminative quality space to reduce noise (Sec.\ref{sec:manifold}); (2) measuring Displacement (progress) and Curvature (stability) as geometric signatures (Sec.\ref{sec:geometric_kinematics}); and (3) employing a Bayesian model to diagnose reliability based on the topological divergence (Sec.\ref{sec:bayesian_assessment}).

\subsection{Constructing the Reasoning Quality Space}
\label{sec:manifold}

Building on the \textbf{semantic geometry} defined in Sec.\ref{sec:Semantic}, to accurately measure reasoning quality, we must isolate the dynamics of logical deduction from unrelated factors (e.g., syntax or static facts) present in the hidden states.

We construct a reasoning quality discriminative space $B$ that maximizes the differences between correct and incorrect reasoning as shown in Algorithm~\ref{alg:get_space}. Let $\mathcal{D}_{\text{pos}}$ and $\mathcal{D}_{\text{neg}}$ denote sets of correct and incorrect reasoning chains. Conceptually, LLM hidden states entangle logical deduction with shared generative factors. Since valid ($C^+$) and invalid ($C^-$) trajectories share this fundamental linguistic base, raw variance fails to isolate true logic. Thus, we compute the scaled difference between their kinematic covariance matrices, $S = C^+ - \lambda C^-$, to explicitly cancel these shared non-logical directions and identify directions where effective reasoning evolves most distinctly. Moreover, the scaling factor $\lambda = \|C^+\|_F/\|C^-\|_F$ normalizes geometric energy. Since incorrect paths' ``hesitation loops'' yield vastly larger covariances than stable correct paths, unscaled subtraction would be dominated by hallucination magnitudes. This normalization strictly isolates structural and directional differences, not mere amplitude. We then extract the top-$k$ eigenvectors (fixing $k=8$, with sensitivity analyzed in Appendix~\ref{app:sensitivity_k}) to form the basis $B$, capturing the principal dimensions that maximally distinguish correct from incorrect reasoning.

The final state $\mathbf{z}_{n,t}$ corresponding to the $t$-th token of a given sample $d_n \in \mathcal{D}$ is obtained by transforming the raw hidden state $\mathbf{h}_{n,t}$ via the metric-induced feature map $G^{1/2}$ and projecting it onto the quality subspace basis $B$:
\begin{equation}
    \mathbf{z}_{n,t} = B^\top (G^{1/2}) \mathbf{h}_{n,t}
\end{equation}

All subsequent geometric metrics are computed using these projected coordinates $\mathbf{z}_{n,t}$.

\subsection{Geometric Signatures of Reasoning Quality}
\label{sec:geometric_kinematics}
\textbf{Theoretical Motivation.} 
Some studies observe that correct reasoning is characterized by a structured, efficient evolution of semantic states, whereas hallucinations or logical errors often correlate with thinking behavior anomalies, such as local stagnation or erratic directional shifts \citep{sun2025detection,bi2025cot}. 
Building on these findings, we advance the understanding of reasoning dynamics from empirical observations to a framework of geometric characterization and theoretical rigorousness. Specifically, we quantify reasoning quality via these two distinct physical components:

\textbf{Progress:} Does the process generate deterministic information shifts, effectively accumulating certainty?

\textbf{Stability:} Is the logical flow stable, maintaining a consistent direction, or is it exhibiting volatile orientation changes?

To operationalize these physical concepts, we map them to specific geometric features motivated by our theoretical analysis. As formulated in Appendix \ref{sec:theory}, 
under an idealized stochastic differential equation framework, \textit{Displacement} and \textit{Curvature} naturally emerge as effective geometric proxies for characterizing reasoning quality. Consequently, we define:


\textbf{1. Displacement (Progress).}
We quantify the ``Progress'' of reasoning as the normalized net geometric distance traversed in the representation space. First, we define the local update vector at step $t$ as $\Delta \mathbf{z}_{n,t} = \mathbf{z}_{n,t} - \mathbf{z}_{n,t-1}$. We quantify progress using the Normalized Net Displacement $M_n$:
\begin{equation}
    M_n = \frac{1}{T} \| \mathbf{z}_{n,T} - \mathbf{z}_{n,0} \|_2 = \frac{1}{T} \left\| \sum_{t=1}^{T} \Delta \mathbf{z}_{n,t} \right\|_2
\end{equation}
\textit{Physical Interpretation:} This metric reflects the \textbf{Progress of Thought}. A high displacement ($M_n \gg 0$) implies that the model is confidently transitioning between distinct semantic states, effectively ``accumulating certainty'' towards a conclusion. Conversely, low displacement suggests the model is idling, repeating information, or stalling without substantive semantic progress.



\textbf{2. Curvature (Stability).} 
We measure reasoning ``Stability'' via geometric curvature. Following discrete differential geometry, we define velocity $\mathbf{v}_t = \Delta \mathbf{z}_{n,t}$ and acceleration $\mathbf{a}_t = \Delta \mathbf{z}_{n,t+1} - \Delta \mathbf{z}_{n,t}$. The extrinsic curvature $\kappa_{n,t}$ is computed as:
\begin{equation}
    \kappa_{n,t} = \frac{\sqrt{\|\mathbf{v}_t\|_2^2 \|\mathbf{a}_t\|_2^2 - (\mathbf{v}_t \cdot \mathbf{a}_t)^2}}{\|\mathbf{v}_t\|_2^3 + \epsilon}
\end{equation}
\noindent where $\epsilon$ is a small constant term  for numerical stability. The Average Trajectory Curvature $K_n$ is then averaged over interior points:
\vspace{-0.4cm}
\begin{equation}
    K_n = \frac{1}{T-2} \sum_{t=1}^{T-2} \kappa_{n,t}
\end{equation}
\vspace{-0.8pt}
\textit{Physical Interpretation:} This metric reflects the \textbf{Stability of Reasoning}. High curvature ($\kappa \gg 0$) indicates sharp semantic turns or oscillations (instability), while low curvature implies a smooth deduction trajectory. Normalizing against $\|\mathbf{v}\|_2^3$ ensures scale-invariance and robustness against noise.

\textbf{Geometric Modes.} 
To empirically validate the effectiveness of these features, we visualized the joint distribution of normalized displacement ($M_n$) and curvature (${K}_n$) across a diverse set of reasoning benchmarks on DeepSeek-R1-Llama-8B  (experimental settings detailed in Appendix \ref{app:dataset_construction}). 
As shown in Figure \ref{fig:motivation_geometry}, we observed a consistent topological separation between correct and incorrect reasoning traces across all domains; consistent results for other models are shown in Appendix \ref{app:additional_vis}. This pattern is robust across architectures and reasoning types, provides a robust physical signature for distinguishing reasoning quality:

\textbf{Correct Reasoning:} High-quality chains consistently cluster in the \textbf{high-displacement, low-curvature} ($M \uparrow, K \downarrow$) regime. This geometric pattern indicates that the model is effectively accumulating information and progressing directly toward the solution without significant backtracking or hesitation.

\textbf{Incorrect Reasoning:} Conversely, reasoning errors and hallucinations cluster in the \textbf{low-displacement, high-curvature} ($M \downarrow, K \uparrow$) regime. This pattern indicates that the model engages in excessive reflection or repeats redundant steps, resulting in local stagnation and frequent changes in direction without making substantial semantic progress toward the answer.


\begin{table*}[t]
\centering
\caption{Comparison of reasoning quality assessment methods across multiple models and tasks. See Appendix \ref{app:stat_significance} for statistical uncertainty analysis (95\% CIs) confirming the stability of these results.}
\label{tab:main_results}
\resizebox{\textwidth}{!}{%
\begin{tabular}{lcccccccccccccccccc}
\toprule
& \multicolumn{3}{c}{\textbf{Fables}} & \multicolumn{3}{c}{\textbf{GPQA}} & \multicolumn{3}{c}{\textbf{GSM8K}} & \multicolumn{3}{c}{\textbf{MATH}} & \multicolumn{3}{c}{\textbf{Social\_iqa}} & \multicolumn{3}{c}{\textbf{Theorem}} \\
\cmidrule(lr){2-4} \cmidrule(lr){5-7} \cmidrule(lr){8-10} \cmidrule(lr){11-13} \cmidrule(lr){14-16} \cmidrule(lr){17-19}
\textbf{Method} & AUROC$\uparrow$ & AUPR$\uparrow$ & FPR@95$\downarrow$ & AUROC$\uparrow$ & AUPR$\uparrow$ & FPR@95$\downarrow$ & AUROC$\uparrow$ & AUPR$\uparrow$ & FPR@95$\downarrow$ & AUROC$\uparrow$ & AUPR$\uparrow$ & FPR@95$\downarrow$ & AUROC$\uparrow$ & AUPR$\uparrow$ & FPR@95$\downarrow$ & AUROC$\uparrow$ & AUPR$\uparrow$ & FPR@95$\downarrow$ \\
\midrule
\multicolumn{19}{l}{\textit{\textbf{DeepSeek-R1-Llama-8B}}} \\
MSP & 0.6044 & 0.6427 & 0.8919 & 0.3837 & 0.4750 & 0.8887 & 0.7424 & 0.7425 & 0.8235 & 0.6276 & 0.6203 & 0.8125 & 0.5867 & 0.6237 & 0.9194 & 0.5270 & 0.4951 & 0.8737 \\
Perplexity & 0.6719 & \underline{0.6581} & 0.8568 & 0.3831 & 0.4107 & 0.8767 & 0.6096 & 0.6088 & 0.8824 & 0.5808 & 0.5628 & 0.8688 & 0.5911 & 0.6174 & 0.8065 & 0.5229 & 0.5552 & 0.8211 \\
Entropy & 0.6156 & 0.6481 & 0.8378 & 0.4857 & 0.5159 & 0.8823 & 0.6863 & 0.6916 & 0.8235 & 0.6041 & 0.5921 & 0.8875 & 0.5925 & 0.6095 & 0.8194 & 0.5111 & 0.4951 & 0.8737 \\
LR Probe & \underline{0.7177} & 0.6539 & \underline{0.8297} & \underline{0.7588} & 0.5451 & 0.8571 & 0.7995 & 0.8195 & 0.7059 & \underline{0.7471} & \underline{0.6541} & 0.8625 & \underline{0.7097} & 0.6883 & \underline{0.7091} & 0.8435 & 0.6497 & 0.8158 \\
SAPLMA & 0.6944 & 0.6555 & 0.8919 & 0.7180 & 0.5505 & \underline{0.7143} & \underline{0.7996} & \underline{0.8204} & \underline{0.6824} & 0.7161 & 0.6363 & 0.8000 & 0.6957 & \underline{0.6891} & 0.7097 & \underline{0.8518} & \underline{0.6617} & 0.8421 \\
CoE & 0.5856 & 0.5603 & 0.8649 & 0.5000 & \underline{0.6000} & 0.8317 & 0.5651 & 0.5653 & 0.8824 & 0.6156 & 0.6332 & 0.8812 & 0.6465 & 0.6338 & 0.8226 & 0.6053 & 0.6402 & 0.8474 \\
CoT-Kinetics & 0.7162 & 0.5787 & 0.8627 & 0.5490 & 0.5000 & 0.8500 & 0.6194 & 0.5527 & 0.8326 & 0.6755 & 0.6271 & \underline{0.7933} & 0.6738 & 0.6010 & 0.8488 & 0.6738 & 0.5951 & \underline{0.8133} \\
\rowcolor{iceblue} TRACED & \textbf{0.7191} & \textbf{0.6586} & \textbf{0.8242} & \textbf{0.8300} & \textbf{0.6607} & \textbf{0.6400} &  \textbf{0.8061} & \textbf{0.8283} & \textbf{0.6500} & \textbf{0.7489} & \textbf{0.6549} & \textbf{0.7500} & \textbf{0.7536} & \textbf{0.6909} & \textbf{0.6500} & \textbf{0.8730} & \textbf{0.7094} & \textbf{0.7625} \\
\midrule
\multicolumn{19}{l}{\textit{\textbf{Qwen3-4B-Thinking-2507}}} \\
MSP & 0.6509 & 0.6727 & 0.8462 & 0.6000 & 0.6787 & 0.8000 & 0.6509 & 0.6489 & 0.8654 & 0.4844 & 0.5592 & 0.7500 & 0.6741 & 0.6441 & 0.8036 & 0.3273 & 0.4622 & 0.8091 \\
Perplexity & 0.5503 & 0.6344 & 0.8675 & 0.5600 & 0.5450 & 0.6000 & 0.6191 & 0.5917 & 0.8077 & 0.6094 & 0.5602 & 0.7650 & 0.6159 & 0.6048 & 0.8750 & 0.6273 & 0.5554 & 0.6364 \\
Entropy & 0.6450 & \underline{0.6549} & 0.8462 & 0.5600 & 0.6587 & 0.8000 & 0.6435 & 0.6406 & 0.8654 & 0.5312 & 0.5285 & \underline{0.7370} & 0.6674 & 0.6485 & 0.8393 & 0.3273 & 0.4622 & 0.8091 \\
LR Probe & 0.6167 & 0.4667 & 0.7500 & \textbf{0.7600} & \textbf{0.7683} & 0.6000 &  \underline{0.7772} & 0.7176 & \underline{0.5754} & 0.7906 & 0.8344 & 0.7500 & \underline{0.6821} & 0.6443 & \underline{0.5768} & 0.7364 & \underline{0.7493} & 0.6455 \\
SAPLMA & \underline{0.6728} & 0.5006 & 0.8876 & 0.6800 & 0.6962 & \underline{0.4402} & 0.7510 & 0.7240 & 0.5962 & \underline{0.8438} & 0.8406 & 0.7572 & 0.6304 & 0.5880 & 0.7679 & \textbf{0.7909} & \textbf{0.7847} & \underline{0.6327} \\
CoE & 0.6314 & 0.5108 & 0.7731 & 0.5400 & 0.6746 & 0.8576 & 0.7293 & \underline{0.7740} & 0.7846 & 0.7500 & \underline{0.8411} & 0.7542 & 0.6448 & \underline{0.6526} & 0.8393 & 0.6545 & 0.5495 & 0.6954 \\
CoT-Kinetics & 0.6266 & 0.5824 & \underline{0.7313} & 0.5800 & 0.6000 & 0.8500 & 0.7417 & 0.6000 & 0.7510 & 0.4219 & 0.5000 & 0.8500 & 0.3071 & 0.5000 & 0.8500 & 0.6455 & 0.6143 & 0.8394 \\
\rowcolor{iceblue} TRACED & \textbf{0.7088} & \textbf{0.6749} & \textbf{0.5397} & \underline{0.7050} & \underline{0.7328} & \textbf{0.4250} & \textbf{0.7825} & \textbf{0.7758} & \textbf{0.5700} & \textbf{0.8495} & \textbf{0.8422} & \textbf{0.7250} & \textbf{0.7194} & \textbf{0.6658} & \textbf{0.5375} & \underline{0.7638} & 0.6333 & \textbf{0.6250} \\
\midrule
\multicolumn{19}{l}{\textit{\textbf{Llama-3.1-8B-Instruct}}} \\
MSP & \underline{0.6237} & \underline{0.6159} & 0.8786 & 0.3857 & 0.4917 & 0.8678 & 0.4770 & 0.5717 & 0.8821 & 0.5777 & 0.6196 & \underline{0.8167} & 0.4817 & 0.5164 & 0.8913 & 0.4630 & 0.5230 & 0.8589 \\
Perplexity & 0.5839 & 0.5915 & 0.8677 & 0.3143 & 0.3709 & 0.8672 & 0.4483 & 0.4956 & 0.8546 & 0.5339 & 0.5197 & 0.8500 & 0.4933 & 0.5229 & 0.8744 & 0.5133 & 0.5468 & 0.8661 \\
Entropy & 0.6086 & 0.6065 & 0.8355 & 0.3186 & 0.3959 & 0.8324 & 0.4575 & 0.5276 & 0.8342 & 0.5822 & 0.5795 & 0.8333 & 0.4829 & 0.5220 & 0.8682 & 0.4719 & 0.5318 & 0.8615 \\
LR Probe & 0.5995 & 0.5567 & \underline{0.7419} & \textbf{0.7571} & 0.6552 & \underline{0.5000} & 0.6966 & 0.7199 & 0.8333 & \underline{0.6362} & 0.6025 & 0.8667 & 0.6445 & \underline{0.6225} & 0.8431 & 0.6435 & 0.6065 & 0.8077 \\
SAPLMA & 0.5720 & 0.5285 & 0.8710 & 0.7333 & \underline{0.6661} & \textbf{0.4286} & 0.7011 & 0.7392 & \underline{0.7767} & 0.6195 & \underline{0.6046} & 0.8333 & \underline{0.6640} & 0.6174 & 0.8608 & \underline{0.6471} & \underline{0.6551} & \underline{0.7308} \\
CoE & 0.5516 & 0.5681 & 0.8677 & 0.3810 & 0.5403 & 0.8325 & \underline{0.7471} & \underline{0.7670} & 0.8643 & 0.5373 & 0.5193 & 0.8333 & 0.6233 & 0.5316 & \underline{0.8324} & 0.4808 & 0.4835 & 0.8503 \\
CoT-Kinetics & 0.5269 & 0.4793 & 0.8535 & 0.5238 & 0.5399 & 0.8714 & 0.3276 & 0.4832 & 0.8534 & 0.4367 & 0.4991 & 0.8467 & 0.6236 & 0.5156 & 0.8437 & 0.5754 & 0.5000 & 0.8500 \\
\rowcolor{iceblue} TRACED & \textbf{0.6676} & \textbf{0.6290} & \textbf{0.6739} & \underline{0.7344} & \textbf{0.6794} & 0.5250 & \textbf{0.7556} & \textbf{0.7714} & \textbf{0.7750} & \textbf{0.6363} & \textbf{0.6265} & \textbf{0.8000} & \textbf{0.7213} & \textbf{0.6233} & \textbf{0.7000} & \textbf{0.6550} & \textbf{0.6750} & \textbf{0.7250} \\
\midrule
\multicolumn{19}{l}{\textit{\textbf{Qwen2.5-7B-Instruct}}} \\
MSP & 0.4600 & 0.5457 & 0.8564 & 0.6389 & 0.6519 & 0.7778 & 0.6489 & 0.5659 & 0.8310 & 0.4828 & 0.5201 & 0.8833 & 0.6407 & 0.6069 & 0.8615 & 0.4583 & 0.4859 & 0.8375 \\
Perplexity & 0.4185 & 0.5178 & 0.8532 & 0.3917 & 0.3871 & 0.8723 & 0.3772 & 0.4123 & 0.8655 & 0.5567 & 0.6053 & 0.8317 & 0.5876 & 0.5796 & 0.8154 & 0.3958 & 0.5279 & 0.8375 \\
Entropy & 0.4385 & 0.5329 & 0.8375 & 0.6806 & 0.7038 & 0.7778 & 0.6495 & 0.5616 & 0.8136 & 0.4817 & 0.5200 & 0.8842 & 0.6492 & 0.6119 & 0.8463 & 0.4458 & 0.4725 & 0.8375 \\
LR Probe & 0.6109 & \underline{0.7340} & \underline{0.8125} & 0.7194 & 0.6680 & 0.5383 & 0.6721 & 0.6845 & 0.8138 & \underline{0.7258} & 0.7349 & \underline{0.8378} & 0.7081 & 0.6557 & 0.7750 & \underline{0.7583} & 0.6957 & 0.8750 \\
SAPLMA & \underline{0.6221} & 0.6818 & 0.8750 & \underline{0.7622} & \underline{0.7450} & \textbf{0.4257} & 0.6810 & 0.6870 & \underline{0.8093} & 0.7003 & \underline{0.7538} & 0.8384 & 0.7387 & 0.7013 & 0.7846 & 0.6958 & \underline{0.7262} & 0.8375 \\
CoE & 0.5283 & 0.5724 & 0.8688 & 0.6833 & 0.6450 & 0.6889 & 0.6236 & 0.4476 & 0.8483 & 0.5586 & 0.6308 & 0.8863 & \underline{0.7486} & \underline{0.7362} & \underline{0.7742} & 0.6625 & 0.6242 & \underline{0.7015} \\
CoT-Kinetics & 0.5674 & 0.6079 & 0.8311 & 0.4722 & 0.6000 & 0.7444 & \underline{0.6829} & \underline{0.6877} & 0.8600 & 0.5442 & 0.5223 & 0.8386 & 0.7080 & 0.6218 & 0.8121 & 0.4417 & 0.5000 & 0.8906 \\
\rowcolor{iceblue} TRACED & \textbf{0.6238} & \textbf{0.7380} & \textbf{0.8060} & \textbf{0.7636} & \textbf{0.7583} & \underline{0.4750} & \textbf{0.6956} & \textbf{0.6895} & \textbf{0.8024} & \textbf{0.7305} & \textbf{0.7859} & \textbf{0.8280} & \textbf{0.7794} & \textbf{0.7371} & \textbf{0.7733} & \textbf{0.7752} & \textbf{0.7314} & \textbf{0.6125} \\
\bottomrule
\end{tabular}}

\end{table*}
\vspace{-0.2cm}
\subsection{Gaussian Discriminant Analysis for Assessment}
\label{sec:bayesian_assessment}

Having established that correct and incorrect reasoning trajectories exhibit distinct topological signatures (separation in $M$-$K$ features), we can now formalize quality assessment as a probabilistic classification problem. We leverage the low-dimensional geometric features to perform Maximum A Probability (MAP) estimation.

\textbf{Probabilistic Formulation.}
Let $\mathbf{x}_n = [M_n, K_n]^\top$ denote the geometric feature vector derived from the reasoning manifold for sample $n$. Let $y_n \in \{1, 0\}$ be the latent quality label (1 for Correct/High Quality, 0 for Incorrect/Low Quality). According to Bayes' theorem, the posterior probability of a trajectory being correct is:
\begin{equation}
    P(y_n=1 | \mathbf{x}_n) = \frac{P(\mathbf{x}_n | y_n=1) P(y_n=1)}{P(\mathbf{x}_n)}
\end{equation}

\textbf{Likelihood $P(\mathbf{x}_n | y_n=c)$:} We approximate the feature density as a Gaussian $\mathcal{N}(\mu_c, \Sigma_c)$. This choice is grounded in the Central Limit Theorem \citep{feller1991introduction}, as the metrics $M$ and $K$ are cumulative aggregations of step-wise dynamics that naturally converge to normality.

\textbf{Prior $P(y_n=c)$:} As the prior distribution approaches a uniform balance $P(y_n=1)=P(y_n=0)$, the influence of external distributional biases diminishes, naturally grounding the assessment on intrinsic geometric evidence. However, empirical validation confirms that TRACED maintains robust stability even under a certain prior imbalances (see Appendix \ref{sec:ratio_robustness}).

\textbf{Decision Rule.}
The model classifies the reasoning quality by selecting the class with the higher posterior probability:
\begin{equation}
    \hat{y}_n = \mathbb{I}\left[ \log \frac{P(y_n=1) P(\mathbf{x}_n | y_n=1)}{P(y_n=0) P(\mathbf{x}_n | y_n=0)} > 0 \right]
\end{equation}
This framework is adaptive and avoids manual threshold search. We do not manually set specific classification thresholds. Instead, the framework learns the natural geometric boundaries of good reasoning directly from manifold topology, automatically adjusting to different reasoning tasks.

\section{Experiments}
\label{sec:experiments}

\subsection{Experimental Setup}

\textbf{Datasets and Metrics.} We evaluate our framework across six benchmarks spanning two distinct domains: 1) \textbf{Structured Reasoning}, requiring strict deduction, covers mathematics (GSM8K \citep{cobbe2021training}, MATH \citep{hendrycks2021measuring}), theorem proving (TheoremQA \citep{chen2023theoremqa}), and scientific reasoning (GPQA \citep{rein2024gpqa}); 2) \textbf{Open-Ended Reasoning}, necessitating divergent thinking, includes Social IQA \citep{sap-etal-2019-social} (social dynamics) and Understanding Fables \citep{srivastava2023beyond} (moral abstraction). Performance is assessed via standard binary classification metrics: \textbf{AUROC} \citep{boyd2013area}, \textbf{AUPR}, and \textbf{FPR@95} \citep{manning1999foundations}. \textbf{Dataset construction, spliting and labeling} details are provided in Appendix  \ref{app:dataset_construction} and Table \ref{tab:dataset_overview}.


\textbf{Models and Baselines.} We employ two Instruction-Tuned models (\textit{Qwen2.5-7B-Instruct} \citep{team2024qwen2}, \textit{Llama-3.1-8B-Instruct} \citep{grattafiori2024llama}) and two Reasoning models (\textit{DeepSeek-R1-Llama-8B} \citep{guo2025deepseek}, \textit{Qwen3-4B-Thinking-2507} \citep{yang2025qwen3}). We compare against three established baseline categories:  (1) \textbf{Output Probability Methods} using scalar statistics (e.g., \textit{MSP} , \textit{Perplexity} \citep{si2022prompting}); (2) \textbf{Hidden State Probes} based on supervised linear classifiers (\textit{LR Probe} \citep{alain2016understanding}, \textit{SAPLMA} \citep{azaria2023internal}); and (3) \textbf{Trajectory Modeling Methods} utilizing geometric features (\textit{CoE} \citep{wang2024latent}, \textit{CoT-Kinetics} \citep{bi2025cot}). See Appendix \ref{app:baselines} for more details.

\subsection{Main Results}

\textbf{TRACED excels in structured reasoning tasks, demonstrating robust evaluation performance across diverse model architectures.}  
As shown in Table \ref{tab:main_results}, TRACED consistently outperforms standard \textit{Output Probability Methods} (e.g., MSP, Perplexity) across all models, demonstrating that geometric features provide a richer correctness signal than scalar probabilities. Against supervised \textit{Hidden State Probes}, TRACED remains highly competitive on benchmarks like GSM8K and MATH, suggesting that modeling the geometric evolution of the entire reasoning process yields a more holistic assessment than classifiers restricted to the final token. While parametric supervised methods marginally lead in specific configurations (e.g., TheoremQA), TRACED consistently secures the second-best performance and matches supervised baselines on challenging tasks like GPQA. Furthermore, TRACED surpasses prior \textit{Trajectory Modeling Methods}, indicating that temporal geometric features characterize the cognitive process more effectively than layer-wise measures (CoE) or dynamical equation modeling (CoT-Kinetics).


\textbf{TRACED demonstrates exceptional performance on divergent, open-ended reasoning tasks.} By modeling the geometric evolution of the reasoning process, our method captures the nuances of divergent thinking, consistently outperforming baselines on tasks requiring deep contextual understanding (e.g., Social IQA, Fables). Notably, performance shift compared to structured domains: TRACED and other Trajectory Modeling Methods frequently surpass supervised Hidden State Probes in these tasks (e.g., CoE outperforms LR probe on SocialIQA with Qwen2.5-7B-Instruct). This corroborates that the complexity of divergent thinking renders static final-token representations insufficient, necessitating the integration of information accumulated throughout the entire reasoning trajectory. 

\begin{figure}[t]
    \centering
    \includegraphics[width=\linewidth]{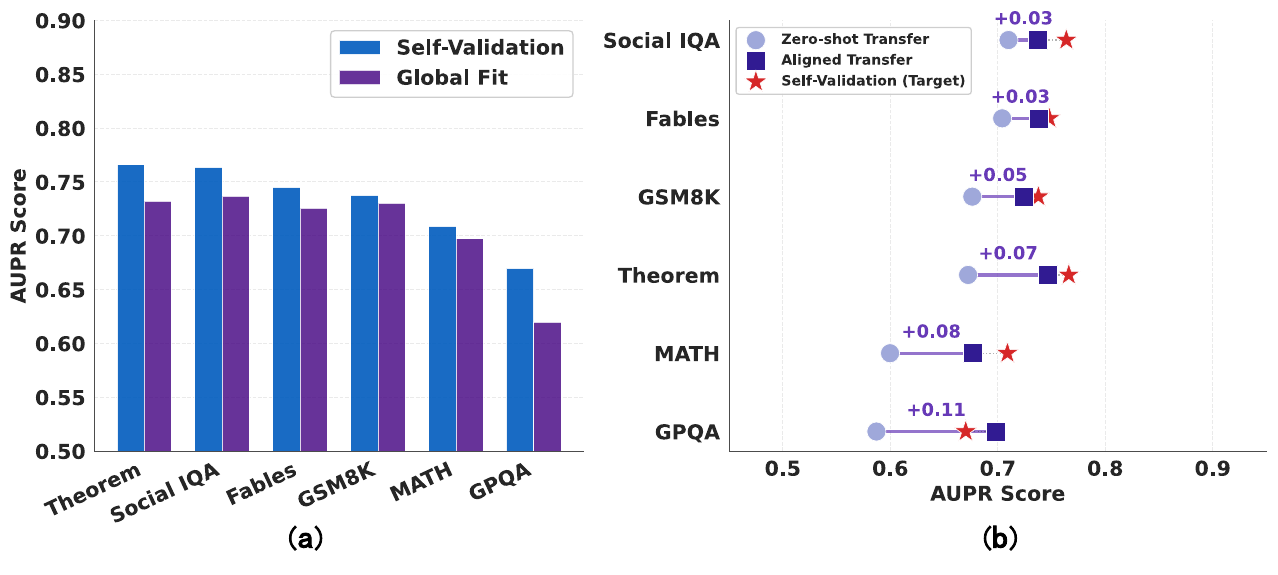}
    \vspace{-0.4cm}
    \caption{\textbf{Universality and Generalization Analysis.} \textbf{(a) Universal Signature:} A single \textit{global fit model} derived from aggregated data achieves competitive AUPR across diverse tasks, supporting the existence of a task-agnostic geometric signature. \textbf{(b) Cross-Domain Adaptation:} Dumbbell plot comparing \textit{Direct Zero-shot Transfer} (blue circles), \textit{Aligned Transfer} (purple squares), and \textit{Supervised In-domain Upper Bound} (red stars). Results confirm that the geometric alignment significantly bridges the performance gap caused by distribution shifts.}
    \label{fig:cross_domain_robustness}
    \vspace{-0.2cm}
\end{figure}


\begin{table}[t]
\centering
\renewcommand{\arraystretch}{0.9}
\caption{\textbf{Robustness of TRACED Across Reasoning Complexity.} Performance of \textit{DeepSeek-R1-Llama-8B} stratified by reasoning steps ($L$). \textbf{Gap ($\Delta$)} denotes the maximum fluctuation across difficulty tiers. The low variance ($\Delta \le 2.7\%$) confirms stability. Comprehensive results for all models are provided in Table
\ref{tab:complexity_robustness_model_wise}.}
\resizebox{0.95\linewidth}{!}{%
\begin{tabular}{l|ccc|c}
\toprule
\textbf{Metric} & \textbf{Easy} ($L \le 4$) & \textbf{Medium} ($5 \le L \le 8$) & \textbf{Hard} ($L > 8$) & \textbf{Gap} ($\Delta$) \\
\midrule
AUROC ($\uparrow$) & 0.775 & 0.748 & 0.766 & \textcolor{green!60!black}{2.7\%} \\
AUPR ($\uparrow$) & 0.708 & 0.710 & 0.723 & \textcolor{green!60!black}{1.5\%} \\
FPR@95 ($\downarrow$) & 0.660 & 0.673 & 0.685 & \textcolor{green!60!black}{2.5\%} \\
\bottomrule
\end{tabular}%
}
\vspace{-0.2cm}
\end{table}

\begin{figure}[t]
    \centering
    \includegraphics[width=\linewidth]{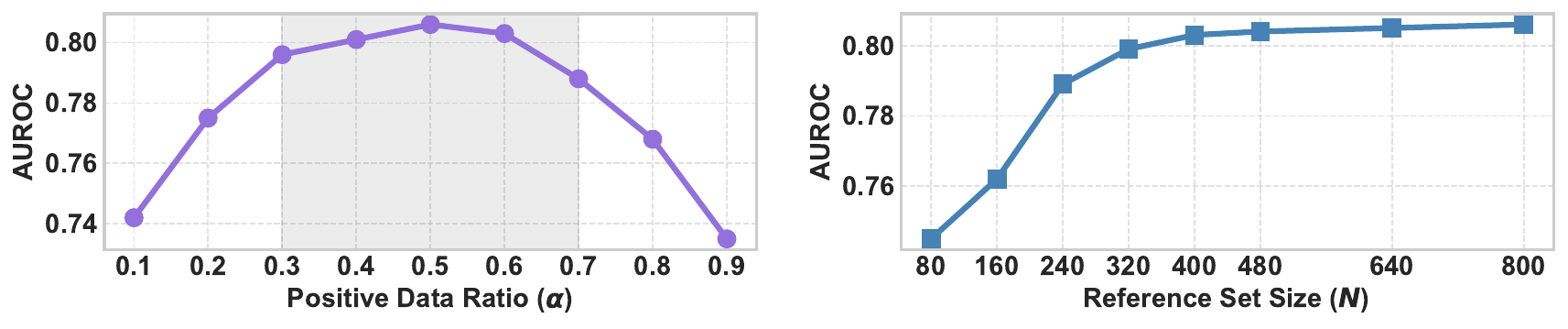} 
    \caption{\textbf{Robustness and Efficiency.} \textbf{(Left) Class Imbalance:} TRACED maintains discriminative stability against distributional shifts, specifically where the prior $P(y_n=1) \in [0.3, 0.7]$. \textbf{(Right) Data Efficiency:} The method achieves rapid geometric convergence, reaching a stability plateau with merely $N \approx 400$ reference samples.}
    \label{fig:robustness_analysis}
    \vspace{-0.2cm}
\end{figure}

\begin{figure}[t]
\centering
\includegraphics[width=1.0\linewidth]{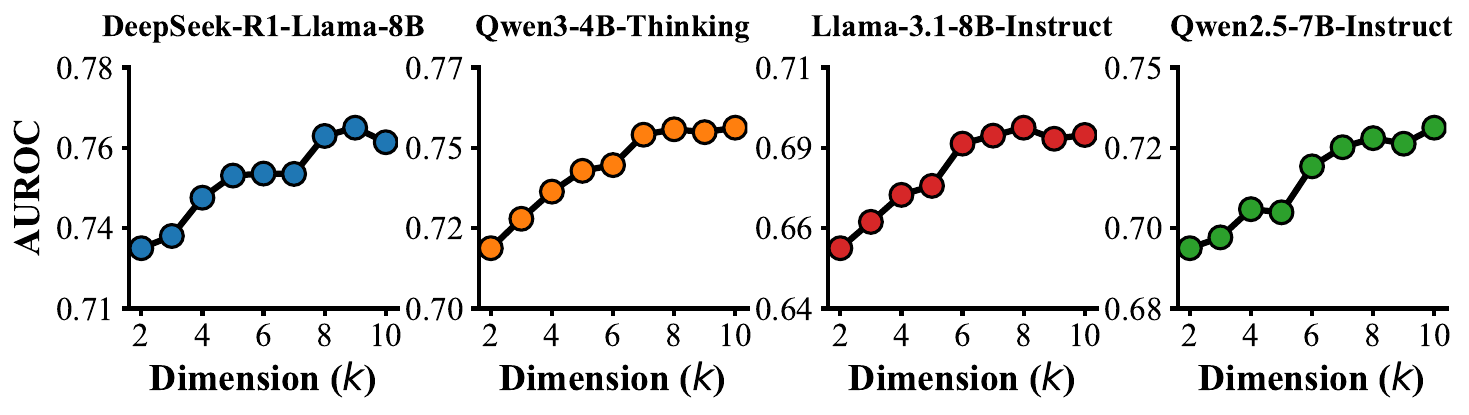}
\caption{\textbf{Sensitivity to Subspace Dimension $k$.} AUROC evaluation across four models ($k \in [2, 10]$) shows performance improves and stabilizes at $k=8$. Additional metrics (AUPR, FPR@95) are detailed in Appendix~\ref{app:sensitivity_k}.}
\label{fig:k_sensitivity}

\end{figure}

\begin{table}[t]
\centering
\caption{\textbf{Component Ablation.} Performance comparison of individual geometric signatures ($M_n$ only, $K_n$ only) versus the full TRACED framework (See Appendix \ref{sec:ablation} for more results).}
\label{tab:ablation_main}
\resizebox{\linewidth}{!}{%
\begin{tabular}{lcccccccc}
\toprule
\textbf{Config} & \textbf{$M_n$} & \textbf{$K_n$} & \textbf{Fables} & \textbf{GPQA} & \textbf{GSM8K} & \textbf{MATH} & \textbf{Soc\_IQA} & \textbf{ThrmQA} \\
\midrule
Disp. Only & \checkmark & & 0.6845 & 0.7812 & 0.7634 & 0.7012 & 0.7122 & 0.8245 \\
Curv. Only & & \checkmark & 0.6512 & 0.7244 & 0.7188 & 0.6855 & 0.6945 & 0.7912 \\
\rowcolor{iceblue!20} \textbf{TRACED} & \checkmark & \checkmark & \textbf{0.7191} & \textbf{0.8300} & \textbf{0.8061} & \textbf{0.7489} & \textbf{0.7536} & \textbf{0.8730} \\
\bottomrule
\end{tabular}}
\vspace{-0.2cm}
\end{table}

\begin{table}[t]
\centering
\caption{No-Subspace Ablation.}
\label{tab:no_subspace_ablation}
\resizebox{\linewidth}{!}{
\begin{tabular}{llcc}
\toprule
\textbf{Model} & \textbf{Strategy} & \textbf{GPQA} & \textbf{GSM8K} \\
\midrule
\multirow{2}{*}{DeepSeek-R1-Llama-8B} & No-Subspace & 0.624 & 0.813 \\
 & \textbf{TRACED} & \textbf{0.661} & \textbf{0.828} \\
\midrule
\multirow{2}{*}{Qwen2.5-4B} & No-Subspace & 0.712 & 0.752 \\
 & \textbf{TRACED} & \textbf{0.733} & \textbf{0.776} \\
\bottomrule
\end{tabular}
}
\end{table}

\textbf{Universality and Cross-Domain Robustness of Geometric Signatures.} We posit that reasoning quality is encoded in the intrinsic topology of the representation manifold (i.e., displacement and curvature) rather than task-specific semantics, forming a domain-invariant signal. To validate this, we compared a \textit{Global Fit} model (derived from aggregated data) against task-specific \textit{Self-Validation} baselines (in-domain upper bounds). As shown in Figure \ref{fig:cross_domain_robustness}(a), although the Global Fit model naturally lags slightly behind specialized oracles, it retains substantial performance across most domains, achieving competitive AUPR scores without task-specific fine-tuning.

While the universal signature generalizes well, a performance gap persists compared to in-domain upper bounds, likely due to distributional shifts in feature magnitude (e.g., smaller displacements in scientific reasoning versus narratives) rather than intrinsic feature failure. We address this via \textbf{ centroid alignment}, adapting the source $\mathcal{D}_S$ to the unseen target $\mathcal{D}_T$ through rigid translation ($\Delta \boldsymbol{\mu} = \boldsymbol{\mu}_T - \boldsymbol{\mu}_S$). As shown in Figure \ref{fig:cross_domain_robustness}(b), this alignment yields substantial recovery, confirming that the drop stemmed solely from distributional misalignment. These findings suggest that while tasks occupy different absolute regions in latent space, their \textit{relative topological structure} regarding quality remains isomorphic. We demonstrate that TRACED shows superior deployment efficiency and robust cross-domain transferability, as detailed in Appendix \ref{sec:efficiency}.

\textbf{Robustness Across Reasoning Complexity.} We evaluate TRACED's stability against problem difficulty, quantifying complexity by the number of essential reasoning steps ($L$) \citep{shojaee2025illusion, zhao2025verifying}. Using stratified uniform sampling across all domains, we categorize samples into three tiers: \textit{Easy} ($L \leq 4$), \textit{Medium} ($5 \leq L \leq 8$), and \textit{Hard} ($L > 8$) (details in Appendix \ref{app:complexity_analysis}). As shown in Table \ref{tab:complexity_robustness_model_wise}, TRACED maintains consistent detection capability as complexity escalates, with performance fluctuations remaining minimal ($\Delta \le 2.8\%$) across all models. This confirms that our method effectively captures trajectory quality independent of reasoning complexity and length.

\begin{figure}[t]
    \centering
    \includegraphics[width=0.9\linewidth]{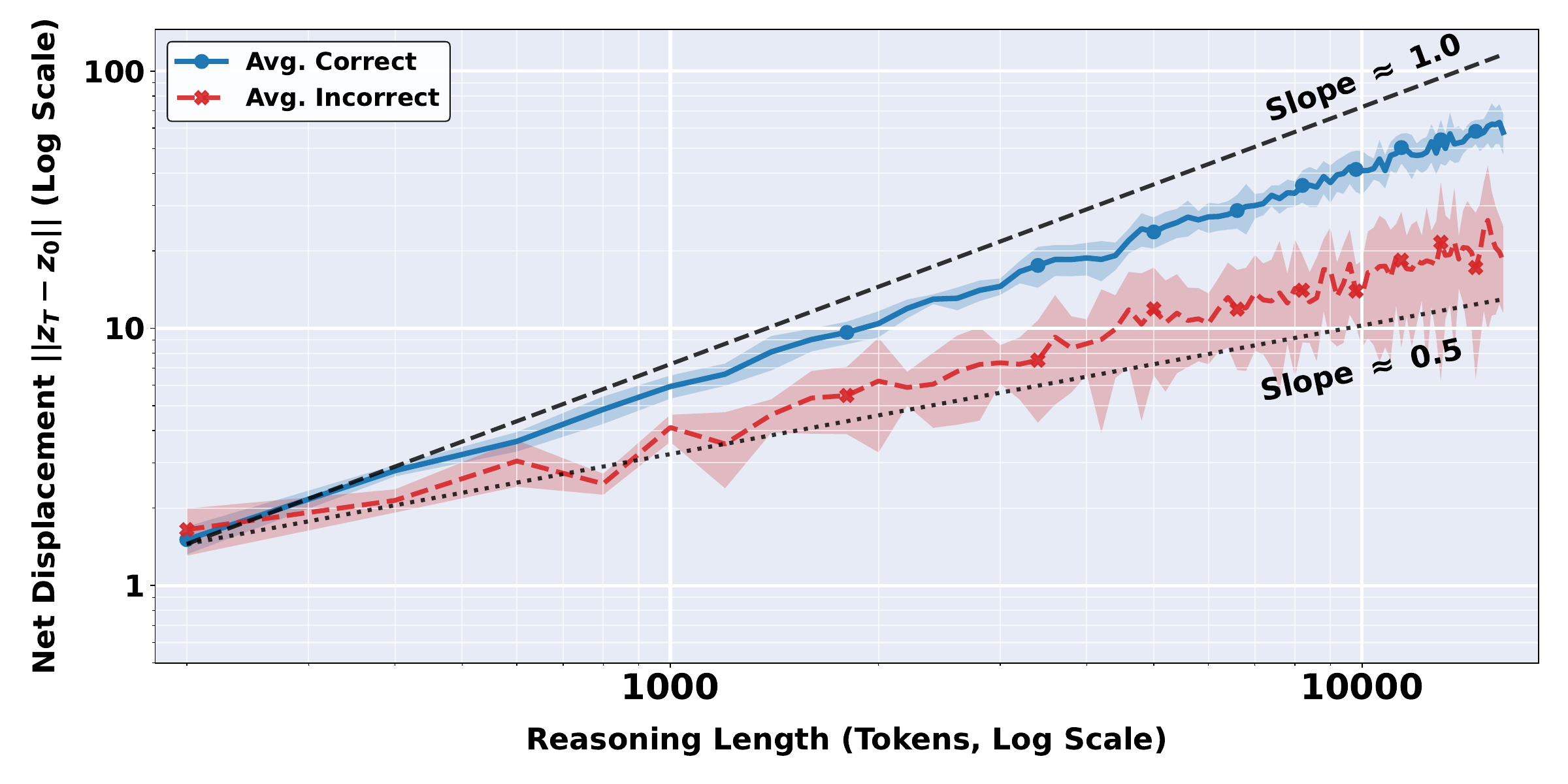}
    \caption{\textbf{Kinematic Scaling Laws of Reasoning.} Log-log plot of Net Displacement $D(t) = ||z_T - z_0||_2$ vs. reasoning length across six domains. \textbf{Blue (Correct):} Exhibits linear scaling ($slope \approx 0.82$), characteristic of directed evolution ($D \propto T$) where computation yields direct semantic progress. \textbf{Red (Incorrect):} Follows sub-linear scaling ($slope \approx 0.53$), resembling random walk ($D \propto \sqrt{T}$) and indicating progress stagnation. Shaded regions denote standard deviation.}
    \label{fig:scaling_law}
    \vspace{-0.4cm}
\end{figure}

\vspace{-0.4cm}
\paragraph{Robustness and Efficiency.}
As visualized in Figure \ref{fig:robustness_analysis}, TRACED retains \textbf{robust discriminative power} against moderate prior mismatches ($\alpha \in [0.3, 0.7]$, Fig. \ref{fig:robustness_analysis}a), and demonstrates \textbf{high data efficiency}, reaching distributional stability with limited samples ($N \approx 400$, Fig. \ref{fig:robustness_analysis}b). See Appendix \ref{sec:ratio_robustness} for extended analysis and results.


\vspace{-0.4cm}
\paragraph{Component Ablation and Hyperparameter Sensitivity.}
(1) Regarding the subspace dimension $k$ of $B$ (Step 5 in Algorithm \ref{alg:get_space}), Figure \ref{fig:k_sensitivity} shows the performance remains robust and converges at $k=8$, demonstrating that a low-rank subspace sufficiently encodes the essential kinematic signals. (2) Moreover, component ablation analysis (Table \ref{tab:ablation_main}) confirms the synergy of Displacement ($M_n$) and Curvature ($K_n$), where their integration consistently yields superior discriminative power over individual features.  

\textbf{The Efficacy of Subspace Projection and Geometric Kinematics.} 
We evaluate the necessity of the reasoning quality subspace by comparing the full TRACED framework against a ``No-Subspace'' variant, as shown in Table \ref{tab:no_subspace_ablation}. The full framework consistently achieves superior AUPR scores across tasks, demonstrating that the subspace basis effectively filters out non-logical artifacts such as syntax and formatting. Crucially, even without subspace projection, the variant relying solely on raw geometric features (displacement and curvature) still outperforms traditional baselines like MSP. This validates the exceptional discriminative power inherent in the underlying physical mechanism of abstracting the reasoning process as geometric trajectories within the latent space.

\vspace{-0.2cm}
\subsection{Kinematic Scaling Laws of Reasoning}
\label{sec:scaling_law}
To empirically validate the theory of kinematic regimes in Appendix \ref{sec:theory}, we focus on the \textbf{Net Displacement} $D(T) = \|\mathbf{z}_T - \mathbf{z}_0\|_2$ as a function of reasoning length $T$ (token count). Based on this metric, the \textbf{theory of kinematic regimes} postulates that reasoning dynamics bifurcate into two distinct asymptotic behaviors: \textit{Correct Reasoning} follows linear scaling ($D(T) \propto T$), whereas \textit{Incorrect Reasoning} exhibits sub-linear scaling ($D(T) \propto T^{0.5}$). We analyze the average geometric evolution across diverse domains (setup details in Appendix \ref{app:dataset_details}), and Figure \ref{fig:scaling_law} confirms this distinct topological phase transition. Correct reasoning (blue) adheres to directed linear scaling ($slope \approx 0.82$), indicating that computational steps translate proportionally into semantic progress ($O(t)$). In contrast, Incorrect chains (red) follow sub-linear scaling ($slope \approx 0.53$), revealing that low-quality reasoning behaves as a random walk confined in the semantic space. This provides a \textbf{fundamental kinematic explanation} for our earlier observation regarding the significant divergence in accumulated displacement ($M_n$) between correct and incorrect reasoning.

\begin{figure}[t]
    \centering
    \includegraphics[width=1.0\linewidth]{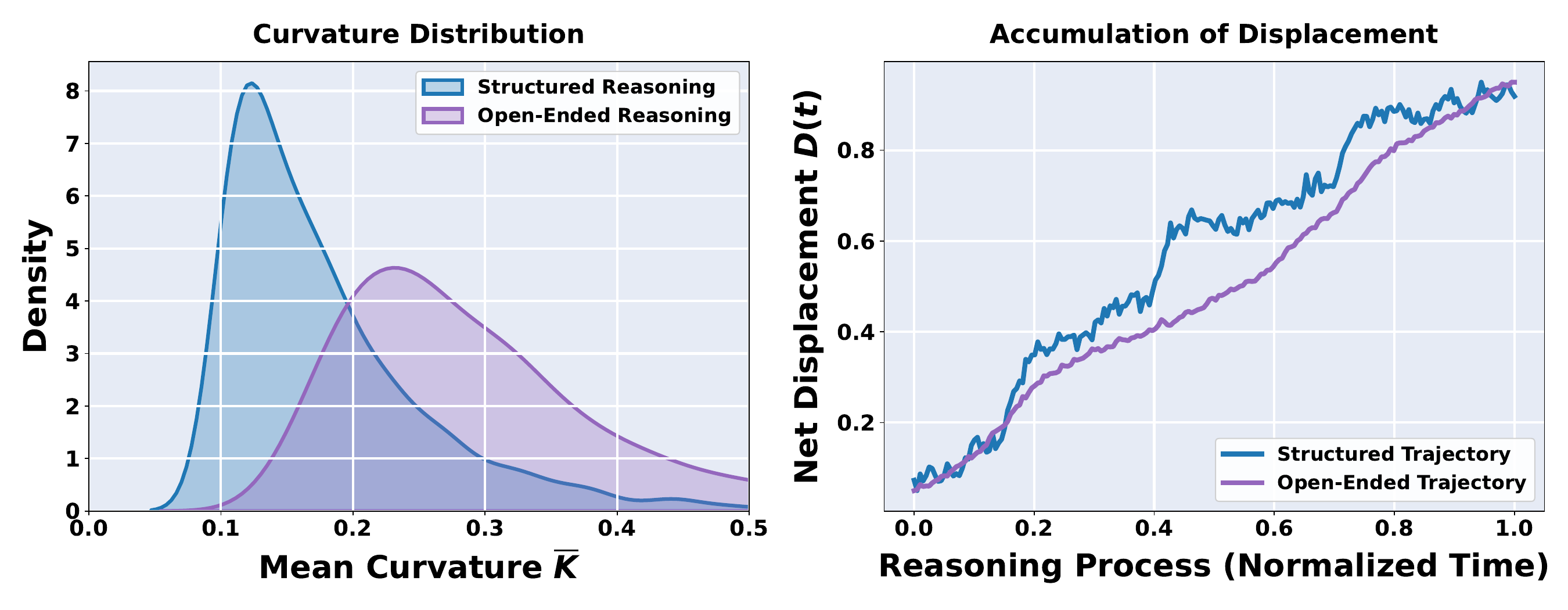}
    \caption{\textbf{Geometric Differences Across Domains.} 
    \textbf{(Left) Curvature Distribution:} Structured reasoning (blue) exhibits a  narrow peak, contrasting with the broad, heavy tail of open-ended reasoning (purple). \textbf{(Right) Displacement Accumulation:} Structured trajectories reveal step-wise growth driven by discrete breakthroughs, while open-ended tasks exhibit a smooth, continuous semantic flow.}
    \label{fig:topology_divergence}
    \vspace{-0.6cm}
\end{figure}

\vspace{-0.2cm}
\subsection{Geometric Differences Across Domains}
\label{sec:topology_analysis}
\vspace{-0.2cm}
Beyond differentiating correctness, we investigate whether the correct geometry of thought varies across reasoning domain. Figure \ref{fig:topology_divergence} reveals how reasoning patterns differ fundamentally between \textit{Structured Domains} and \textit{Open-Ended Domains} (detailed setup in Appendix \ref{app:topology_details}). 

\textbf{Curvature (Left):} Structured reasoning exhibits a highly concentrated distribution, reflecting ``Logical Stiffness'', where any minor semantic deviation (increased curvature) risks breaking the logical chain. In contrast, open-ended reasoning follows a long-tailed distribution, indicating that open contexts permit and even encourage a degree of thought dispersion, provided the core narrative remains intact. \textbf{Displacement (Right):} Structured tasks exhibit \textbf{step-wise transitions}, where solving a critical sub-problem triggers a sudden jump in semantic distance. Conversely, open-ended tasks follow a \textbf{smooth gradient}, reflecting a steady accumulation of narrative understanding that gradually saturates as the description deepens.

\begin{figure}[t]
  \centering
  \includegraphics[width=0.9\linewidth]{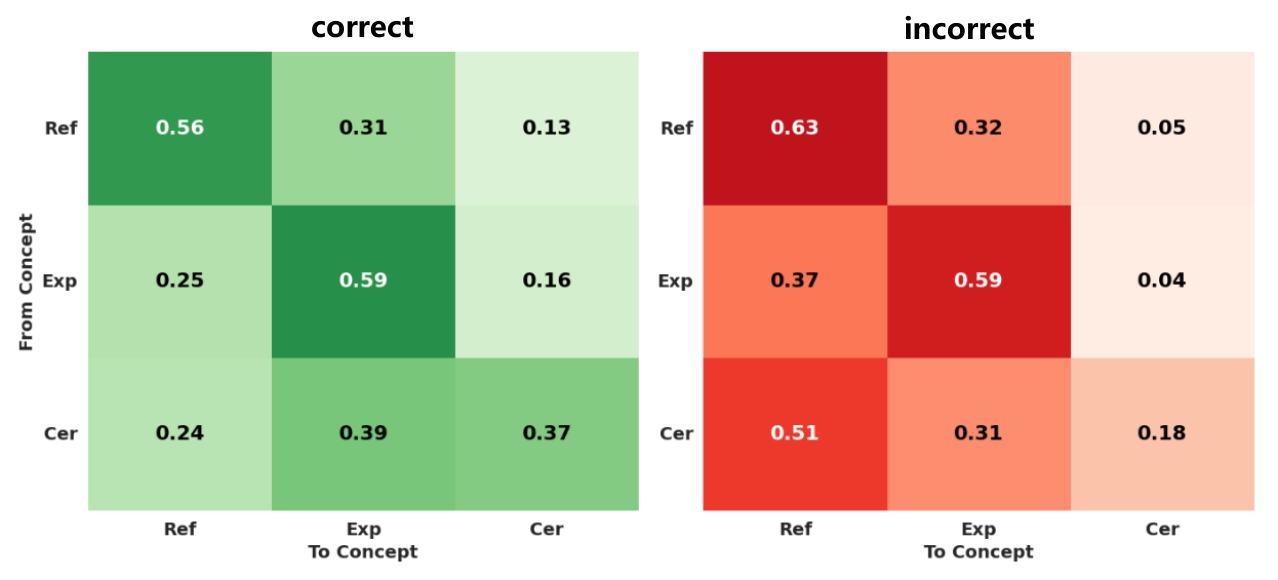}
  \vspace{-8pt} 
  \caption{\textbf{Transition Structures of Cognitive States.} Visualization of transition probabilities $P(S_{t+1} | S_t)$ of Correct  and Incorrect Reasoning. }
  \label{fig:markov_dynamics}
  \vspace{-0.8cm}
\end{figure}

\vspace{-0.2cm}
\subsection{Cognitive State Dynamics}
\label{sec:dynamics}

To investigate the dynamics of cognitive states during reasoning, we model the transition dynamics between cognitive concept states (Reflection, Exploration, Certainty), regarded as different stages of reasoning by previous studies \citep{chen2025seal}. See Appendix \ref{app:concept_extraction} for concept extraction details. Figure \ref{fig:markov_dynamics} illustrates the transition probabilities $P(S_{t+1} | S_t)$, revealing that high-quality reasoning converges effectively toward Certainty, whereas low-quality reasoning remains trapped in loops that fail to reach a certainty conclusion.

\textbf{1) Accessibility of Certainty.}
A primary distinction is the accessibility of Certainty. In correct reasoning, both \textit{Reflection} and \textit{Exploration} drive directed progression to \textit{Certainty} ($P \approx 0.13$ and $0.16$). Conversely, incorrect trajectories face a transition bottleneck (probabilities collapse to $<0.05$), preventing the model from reaching a stable conclusive state.
\textbf{2) Hesitation Loops.}
Incorrect reasoning exhibits higher regression from \textit{Exploration} back to \textit{Reflection} ($0.37$ vs. $0.25$). This indicates logical dead-ends that force the model to retreat to earlier phases, resulting in an \textbf{oscillatory pattern} where it cycles fruitlessly between hesitation and exploration without forward progress.
\textbf{3) Stability and Persistence.} 
Correct reasoning demonstrates strong \textbf{temporal persistence} in \textit{Certainty} ($P(Cer|Cer) \approx 0.37$ vs. $0.18$), implying the model maintains confidence long enough to consolidate thoughts. In contrast, incorrect reasoning suffers from \textbf{structural instability}, where \textit{Certainty} frequently reverts to \textit{Reflection} ($P(Ref|Cer) \approx 0.51$, double the correct rate), signaling premature loss of confidence.




\begin{figure}[t]
    \centering
    \includegraphics[width=0.85\linewidth]{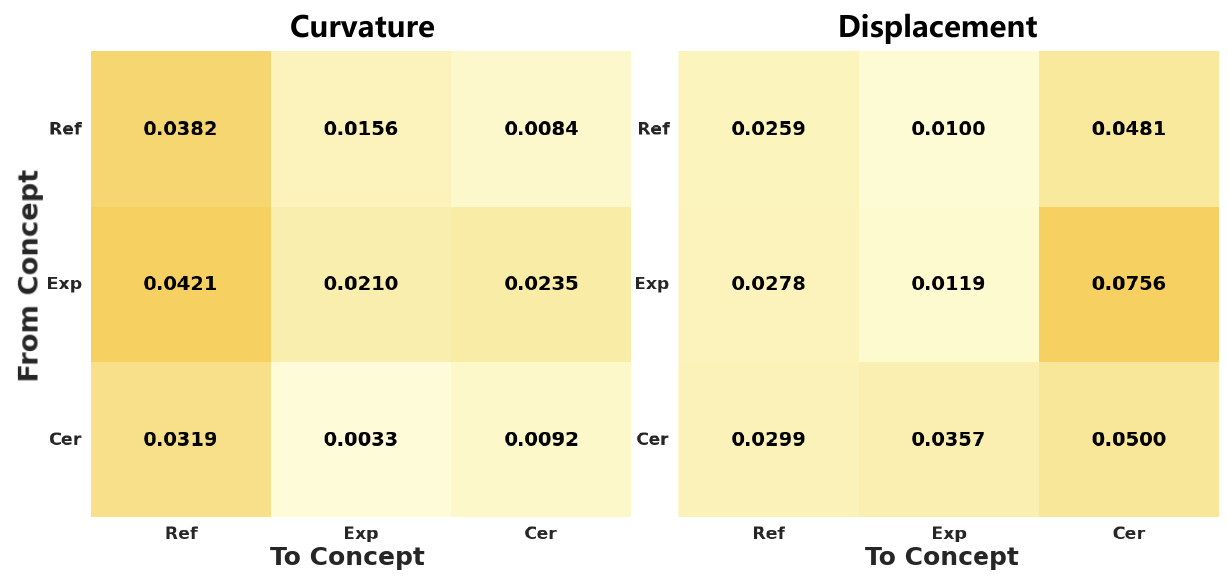}
    \caption{\textbf{Geometric Cost of State Transitions.} (Left) Avg. curvature change ($\Delta K$). (Right) Avg. displacement change ($\Delta M$). Curvature encodes the cost of uncertaint directional reorientation, while displacement reflects accumulative semantic progress.}
    \label{fig:geometric_semantics_heatmap}
    \vspace{-0.3cm}
\end{figure}

\begin{figure}[t]
    \centering
    \includegraphics[width=0.9\linewidth,height=5cm]{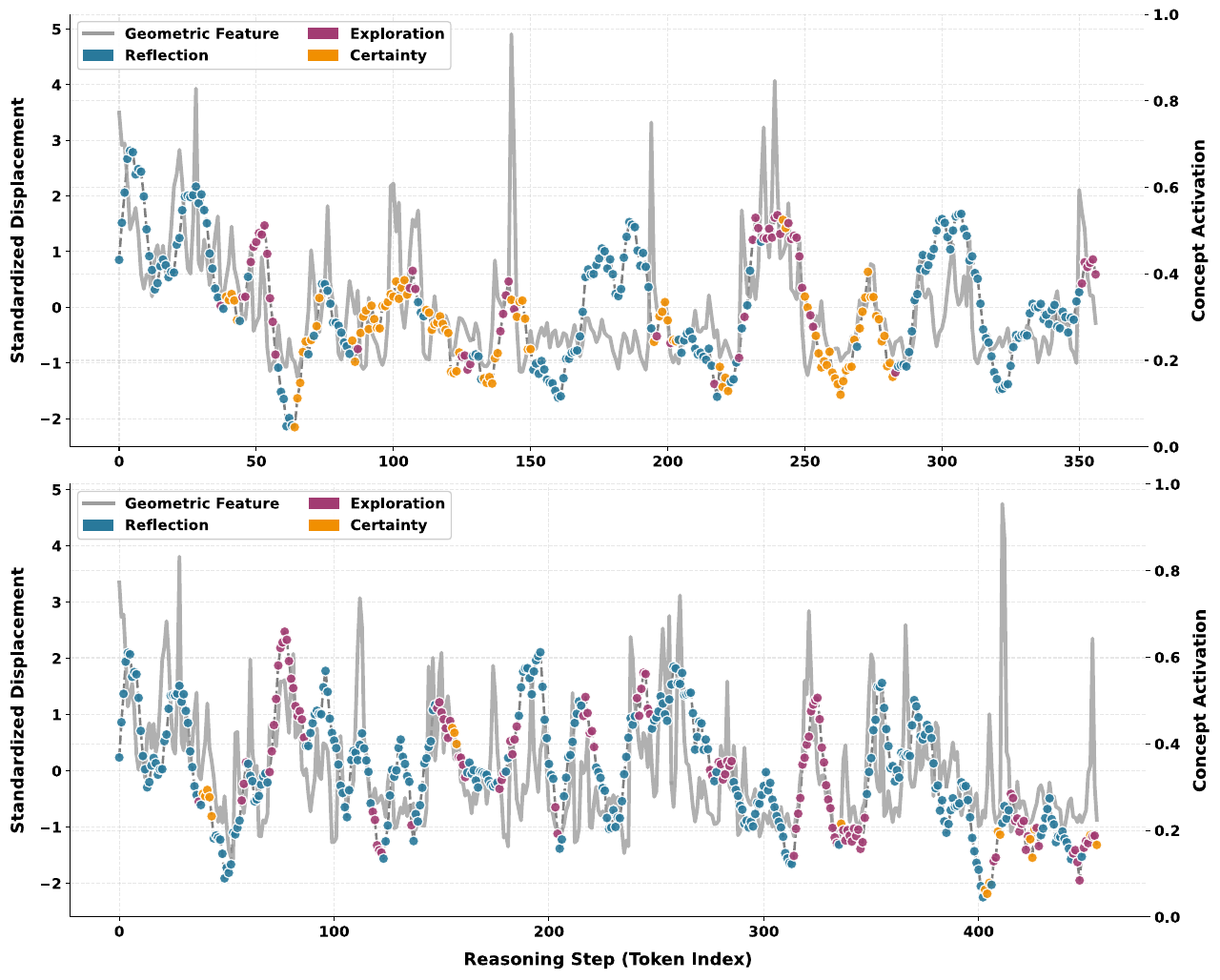}
    \caption{\textbf{Geometric-Semantic Synchronization.} Alignment between geometric displacement (gray) and cognitive states. }
    \label{fig:concept_swift}
    \vspace{-0.6cm}
\end{figure}

\vspace{-4pt}
\subsection{The Bridge: Linking Semantic to Geometry}
\label{sec:bridge}

To ground the cognitive semantics (in Sec. \ref{sec:dynamics}) in geometric properties, we analyze the \textit{geometric cost} associated with each state transition $S_i \to S_j$, as shown in Figure \ref{fig:geometric_semantics_heatmap}.

\textbf{1) Curvature as the Cost of Uncertainty.} The heatmap identifies cognitive reorientation as the primary driver of directional instability. Peak curvature occurs during regressions from exploration to reflection ($Exp \to Ref, \Delta K \approx 0.042$), confirming that ``Hesitation Loops'' demand significant representational shifts. Similarly, abandoning certainty ($Cer \to Ref, \Delta K \approx 0.031$) induces sharp geometric turns. These frequent reorientations disrupt trajectory stability, explaining the high-curvature profile of incorrect reasoning. 

\textbf{2) Displacement as Accumulative Progress.} Conversely, displacement is governed by the convergence and maintenance of Certainty. Transitions into and within certainty yield maximum semantic distance ($Exp \to Cer, \Delta M \approx 0.076$; $Cer \to Cer, \Delta M \approx 0.050$). This confirms displacement as a proxy for semantic progress: correct reasoning maximizes net movement by sustaining confidence phases. 

\textbf{3) Empirical Validation of Synchronization.} Figure \ref{fig:concept_swift} maps displacement to cognitive states. \textit{Correct reasoning} (top) exhibits sustained \textcolor{orange}{Certainty} alongside displacement peaks, establishing displacement as a physical manifestation of confidence. In contrast, \textit{incorrect reasoning} (bottom) suffers from \textcolor{red}{Exploration}-\textcolor{blue}{Reflection} oscillations. This semantic turbulence mirrors the geometric ``Hesitation Loop,'' where the inability to converge to Certainty halts displacement accumulation. See Appendix \ref{sec:qualitative_cases} for qualitative visualizations of these reasoning dynamics.

\begin{table}[t] 
\centering
\caption{Alignment Training Results. Performance comparison of Vanilla, standard GRPO, and TRACED-GRPO on GSM8K and MATH benchmarks.}
\label{tab:grpo_alignment_results}
\resizebox{\columnwidth}{!}{ 
\begin{tabular}{llcccccc}
\toprule
\multirow{2}{*}{\textbf{Model}} & \multirow{2}{*}{\textbf{Method}} & \multicolumn{4}{c}{\textbf{Training Configuration}} & \multicolumn{2}{c}{\textbf{Evaluation Results}} \\
\cmidrule(lr){3-6} \cmidrule(lr){7-8}
& & \textbf{Learning Rate} & \textbf{Group ($G$)} & \textbf{KL $\beta$} & \textbf{TRACED $\lambda$} & \textbf{GSM8K (\%)} & \textbf{MATH (\%)} \\
\midrule
\multirow{3}{*}{Qwen2.5-0.5B} & Vanilla & - & - & - & - & 30.93 & 24.60 \\
 & GRPO & 3e-6 & 8 & 0.04 & - & 39.65 & 28.33 \\
 & \textbf{TRACED-GRPO} & 3e-6 & 8 & 0.04 & 0.1 & \textbf{43.97} & \textbf{30.65} \\
\midrule
\multirow{3}{*}{Qwen2.5-1.5B} & Vanilla & - & - & - & - & 62.25 & 44.00 \\
 & GRPO & 1e-6 & 8 & 0.02 & - & 67.83 & 47.20 \\
 & \textbf{TRACED-GRPO} & 1e-6 & 8 & 0.02 & 0.1 & \textbf{71.22} & \textbf{50.10} \\
\bottomrule
\end{tabular}
}
\begin{flushleft} 
\small * The 1.5B model employs LoRA fine-tuning ($r=8, \alpha=16$); the 0.5B model uses full-parameter fine-tuning.
\end{flushleft}
\vspace{-24pt}
\end{table} 
\vspace{-6pt}
\subsection{Extension: TRACED as a Reliable Reward Signal}
\label{sec:grpo_reward}
\vspace{-2pt}
\textbf{The Potential of TRACED as a Reliable Reward Signal.} We integrate the geometric kinematics score, $S_{\text{trace}} = \text{Norm}(M_n) - \text{Norm}(K_n)$, into the  GRPO framework, formulating the final reward as $R_{\text{final}} = R_{\text{ans}} + \lambda \cdot S_{\text{trace}}$ (where $\lambda = 0.1$). Here, $R_{\text{ans}}$ denotes the standard GRPO reward, which explicitly consists of the outcome accuracy reward and the process formatting reward.  We conducted preliminary alignment training experiments on the Qwen2.5-0.5B and Qwen2.5-1.5B models, using a training set that covers seven mathematical sub-disciplines scientifically proportioned across different difficulty levels. As demonstrated in Table~\ref{tab:grpo_alignment_results}, experimental results demonstrate that TRACED-GRPO outperforms the traditional Vanilla GRPO on mathematical benchmark tasks such as GSM8K and MATH. This performance stems from two core mechanisms. 

First, TRACED effectively mitigates the issue of zero reward variance caused by all samples in a batch being incorrect in high-difficulty tasks. By assigning non-parametric geometric scores based on trajectory quality, it provides a crucial tie-breaking signal when outcome-based rewards fail, ensuring that advantage values do not collapse to zero, thereby maintaining stable gradient updates. Second, by strictly penalizing high-curvature hesitation loops and rewarding net semantic displacement, TRACED alleviates reward hacking behaviors caused by sparse outcome-based rewards. It restrains, to a certain extent, the false alignment phenomenon where models accidentally achieve high scores through incorrect or redundant logical paths, thereby forcefully guiding the policy model to converge toward a structurally rigorous, logically concise, and robust authentic reasoning paradigm.
\vspace{-6pt}
\section{Related Works}
\vspace{-4pt}
\textbf{Assessment of Reasoning Quality.} 
Existing methods range from \textbf{resource-intensive} external verifiers~\cite{xiong2023can,li2022making,zhang-etal-2025-icr} to \textbf{performance-limited} intrinsic probability metrics~\cite{huang2023look,he2025can}. While some methods analyze hidden state evolution \cite{wang2024latent,bi2025cot}, they typically neglect critical temporal signals by modeling averaged token representations. Unlike prior works, we construct an  evaluation signal based on theoretically grounded geometric features of temporal reasoning, achieving consistent improvements and scalability across diverse tasks.

\textbf{Representation Analysis of Reasoning.} 
Research on internal representations has expanded to temporal dimensions and geometric metrics \cite{vilas2025tracing,li2025core,wang2024latent,hu2024hopfieldian,zhang2025mechanistic}. However, prior works often lack explicit geometric interpretability or fail to explain the correspondence between reasoning behaviors and geometric variations \cite{zhou2025geometry,manson2025curved}. In contrast, we uncover the intrinsic correspondence between geometric formalism and cognitive reasoning behavior, advancing the interpretability of the reasoning process. See details in Appendix \ref{sec:appendix_related_work}.

\vspace{-8pt}
\section{Conclusion}

\vspace{-4pt}
We introduce TRACED, a framework that evaluates LLM reasoning via geometric kinematics. By quantifying the Progress and Stability, we reveal that correct reasoning manifests as high-progress, stable trajectories, whereas incorrect are characterized by low-progress, unstable patterns. This topological distinction enables competitive performance and superior robustness across diverse benchmarks. We also bridges geometry and cognition by interpreting curvature as ``Hesitation Loops'' and displacement as Certainty, providing a physical lens to decode machine thought.
\vspace{-8pt}
\section*{Acknowledgement}
\vspace{-7pt}
This work are partially supported by the funding BAS/1/1689-01-01, RGC/3/7125-01-01, FCC/1/5940-20-05, FCC/1/5940-06-02, and King Abdullah University of Science and Technology (KAUST) – Center of Excellence for Generative AI, under award number 5940 and a gift from Google, and MBZUAI Research Fund BF0100. 
 \vspace{-8pt}
\section*{Impact Statement}
\vspace{-7pt}
This paper presents work whose goal is to advance the field of Machine
Learning. There are many potential societal consequences of our work, none of which we feel must be specifically highlighted here.

\nocite{langley00}

\bibliography{example_paper}
\bibliographystyle{icml2026}

\newpage
\appendix
\onecolumn

\section{Dataset Construction and Labeling Details}
\label{app:dataset_construction}

To construct a robust dataset of paired correct and incorrect reasoning traces, we followed a standardized generation and verification pipeline.

\subsection{Reasoning Trajectory Generation}
\label{sec:trajectory_generation}

We generated $N=10$ distinct reasoning paths for each question to construct the geometric manifold. The generation configuration is strictly tailored to the model architecture and dataset characteristics.

\textbf{1. Generation Hyperparameters.}
\textbf{Temperature Sampling ($T=0.7$):}  We set the temperature to \textbf{0.7} for all generation. This setting strikes an optimal balance: it introduces sufficient stochasticity to reveal diverse reasoning paths (and potential hallucinations) for topological analysis, while maintaining enough coherence.

\textbf{Maximum Token Limit:}  (1)\textbf{Standard Instruct Models:} Set to \textbf{4,096 tokens} to cover standard chain-of-thought derivations. 
(2)\textbf{Large Reasoning Models (LRMs):} For LRMs, we extended the limit to \textbf{16,384 tokens} to prevent the truncation of the extensive internal \texttt{<think>} phase.

\textbf{Leakage Prevention via Question-Level Splitting.}
To rigorously prevent data leakage and ensure evaluation fairness, we enforce a strict separation based on unique question prompts. Specifically, the dataset partition is performed on the question level rather than the trajectory level. Consequently, all $N$ trajectories derived from a specific question reside exclusively in either the calibration set or the evaluation set. This guarantees that the model never encounters reasoning traces from the same prompt during the calibration phase, ensuring zero overlap between splits.

\textbf{2. Task-Specific Prompting.}
To ensure validity, we adopted task-specific templates derived from OpenCompass. We classify our benchmarks into four structural categories based on the required output format (see Table \ref{tab:dataset_overview} for dataset mapping).

\begin{table*}[h]
\centering
\scriptsize
\renewcommand{\arraystretch}{1.3}
\setlength{\tabcolsep}{4pt}
\caption{\textbf{Overview of the Datasets.} Statistics of the original source datasets. \textbf{Note:} Due to the cost of constructing reasoning trajectories ($N=10$ trajectories per sample), we randomly sampled 1,000 instances for experimentation.All collected samples were subsequently partitioned into training and evaluation sets using a randomized 8:2 split.}
\label{tab:dataset_overview}
\begin{tabularx}{\textwidth}{@{}l l X l l l X@{}}
\toprule
\textbf{Dataset} & \textbf{Reference} & \textbf{Task Description} & \textbf{Metric} & \textbf{Format} & \textbf{Prompt} & \textbf{Example (Truncated)} \\ 
\midrule
\textbf{GSM8K} & \citep{cobbe2021training} & Multi-step grade school mathematics problems. & Acc. & Number & Type A & \textit{Q: Natalia sold clips to 48 friends... A: 48} \\ 
\textbf{MATH} & \citep{hendrycks2021measuring} & Challenging competition-level math. & Acc. & \LaTeX{} & Type C & \textit{Q: Find $f(f(2))$. A: \textbackslash{}boxed\{298\}} \\ 
\textbf{TheoremQA} & \citep{chen2023theoremqa} & Application of STEM theorems to solve novel problems. & Acc. & Option / Boolean / Number& Type D & \textit{Q: Calculate derivative of $x^2$ at $3$. A: 6.0} \\ 
\textbf{GPQA} & \citep{rein2024gpqa} & PhD-level scientific QA. & Acc. & Option & Type B & \textit{Q: Which mechanism drives... A: (B)} \\ 
\textbf{Social IQA} & \citep{sap-etal-2019-social} & Reasoning about social interactions and motivations. & Acc. & Option & Type B & \textit{Q: Tracy pressed wrong button... A: (A)} \\ 
\textbf{Fables} & \citep{srivastava2023beyond} & Abstracting morals from allegorical narratives. & Acc. & Option & Type B & \textit{Q: The Tortoise and the Hare implies... A: (A)} \\ 
\bottomrule
\end{tabularx}
\vspace{-0.2cm}

\end{table*}
\subsection{Ground Truth Verification}
\label{sec:ground_truth_verification}

To ensure the high fidelity of our reasoning quality labels, we implemented a rigorous Two-Pronged Verification Strategy. This pipeline integrates strict programmatic parsing with an LLM-based semantic judge to label generated trajectories as \textit{Positive} (Correct) or \textit{Negative} (Incorrect).

\subsubsection{Stage 1: Deterministic Programmatic Parsing}
We first employ task-specific parsers to extract and normalize the model's output. To ensure data integrity, any trajectory missing End-of-Sequence tokens (e.g., \texttt{<|im\_end|>}, \texttt{</s>}) is considered incomplete and is \textbf{excluded from the dataset}. For completed traces, we apply the following logic based on the dataset type:

\begin{tcolorbox}[colback=gray!5, colframe=gray!50, title=\textbf{Task-Specific Instructions}]
\small

\textbf{Type A: Math Word Problems (GSM8K)} \\
\textit{Output Requirement: A specific numerical value.}
\vspace{0.1cm}
\hrule
\vspace{0.1cm}
\texttt{Answer the following math problem. [CoT Trigger]. The last line of your response should be of the following format: "Answer: <NUMBER>" (without quotes), where <NUMBER> is the final calculated value.}
\vspace{0.1cm}
\texttt{Question: \{input\_data\}}\\
\texttt{Response:}

\vspace{0.3cm}

\textbf{Type B: Multiple Choice Reasoning (GPQA, Social IQA, Understanding Fables)} \\
\textit{Output Requirement: Selection of a specific option letter (A, B, C, D).}
\vspace{0.1cm}
\hrule
\vspace{0.1cm}
\texttt{Answer the following multiple-choice question. [CoT Trigger]. The last line of your response should be of the following format: "Answer: <LETTER>" (without quotes), where <LETTER> is one of the provided options (e.g., A, B, C, D).}
\vspace{0.1cm}
\texttt{Question: \{input\_data\}} \\
\texttt{Response:}

\vspace{0.3cm}

\textbf{Type C: Advanced Mathematics (MATH)} \\
\textit{Output Requirement: \LaTeX{} formatted expression.}
\vspace{0.1cm}
\hrule
\vspace{0.1cm}
\texttt{Question: \{input\_data\}} \\
\texttt{Please [CoT Trigger], and put your final answer within \textbackslash{}boxed\{\}.} \\
\texttt{Response:}

\vspace{0.3cm}

\textbf{Type D: Theorem Proving (TheoremQA)} \\
\textit{Output Requirement: Dynamic types (Boolean, List, Number, etc.).}
\vspace{0.1cm}
\hrule
\vspace{0.1cm}
\texttt{Instruction:} \\
\texttt{Please read a math problem. [CoT Trigger]. The answer is decided by Answer Type.} \\
\texttt{If the Answer type in [bool], the answer needs to be True or False.} \\
\texttt{Else if the Answer type in [integer, float], The answer needs to be in numerical form.} \\
\texttt{Else if the Answer type in [list of integer, list of float], the answer needs to be a list of number like [2, 3, 4].} \\
\texttt{Else if the Answer type in [option], the answer needs to be an option like (a), (b), (c), (d).} \\
\texttt{You need to output the answer in your final sentence like 'Therefore, the answer is ...'.} \\
\vspace{0.1cm}
\texttt{Question: \{input\_data\}} \\
\texttt{Answer\_type: \{answer\_type\}} \\
\texttt{Response:}

\end{tcolorbox}

\vspace{0.2cm}
\noindent \textit{*Note on CoT Triggers:} For Standard Instruct Models, \texttt{[CoT Trigger]} is replaced with "Think step by step". For LRMs (e.g., DeepSeek-R1), this trigger is omitted to respect the model's native reinforcement-learned thinking patterns, relying solely on format constraints.

\begin{enumerate}
    \item \textbf{Type A: Numeric Extraction (GSM8K).} 
    For arithmetic tasks, we employ a regex-based extraction pipeline robust to formatting noise. We verify if the specific \texttt{answer\_prefix} (e.g., ``Answer:'') exists. If found, we extract the subsequent text and identify all numeric values using the regex \texttt{r"\textbackslash{}d+\textbackslash{}.?\textbackslash{}d*"}. We select the \textbf{last identified number} as the prediction. Both the prediction and ground truth are normalized by removing commas (thousands separators) and trailing zeros/decimals (e.g., converting ``1,000.0'' to ``1000'') before performing an exact equivalence check.
    
    \item \textbf{Type B: Multiple Choice Matching (GPQA, Social IQA, Fables).}
    For option-selection tasks, we implement a parser to identify the predicted option letter. The parser scans the text following the answer delimiter for patterns matching \texttt{r"(?i)Answer:\textbackslash{}s*\(?([A-D])\)?"}. We extract the final matching capture group, normalize it to uppercase, and perform an exact string match against the ground truth letter.
    
    \item \textbf{Type C: Symbolic Matching (MATH).}
    For complex mathematical expressions, we rely on the \LaTeX{} \texttt{\textbackslash{}boxed\{\}} format. We extract the string content within the last \texttt{\textbackslash{}boxed\{\}} tag. To handle variability in \LaTeX{} spacing, we normalize both the extracted content and the gold label by stripping all whitespace (e.g., \texttt{x + y} $\rightarrow$ \texttt{x+y}) before comparison.

    \item \textbf{Type D: Dynamic Extraction (TheoremQA).}
    Given the heterogeneous output formats of TheoremQA, we implement a conditional parser guided by the sample's \texttt{Answer\_type}. We first isolate the concluding segment following the phrase ``answer is''.
    \begin{itemize}
        \item \textbf{Bool:} We scan for case-insensitive occurrences of ``True'' or ``False''.
        \item \textbf{Integer/Float:} We apply the same regex extraction strategy as Type A to retrieve the final numerical value.
        \item \textbf{List:} We extract content enclosed within square brackets using the regex \texttt{r"\textbackslash{}[(.*?)\textbackslash{}]"}.
        \item \textbf{Option:} We identify parenthesized characters (e.g., \texttt{(a)}) using the regex \texttt{r"\textbackslash{}(([a-d])\textbackslash{})"}.
    \end{itemize}
    The extracted prediction is then compared against the ground truth, which is similarly parsed to match the expected format (e.g., stripping brackets for lists).
\end{enumerate}

\subsubsection{Stage 2: LLM-as-a-Judge Verification}
Sole reliance on string matching can yield False Negatives due to semantic equivalence (e.g., $\frac{1}{2}$ vs. $0.5$) or rigid formatting requirements. To mitigate this, we employ a powerful instruction-tuned model (Llama-3-70B-Instruct \cite{dubey2024llama}) as an expert judge for cases where programmatic parsing yields a mismatch or ambiguity.
We construct a verification prompt containing the \textit{Question}, \textit{Gold Answer}, and \textit{Model Generation}. The judge is instructed to evaluate the semantic correctness of the reasoning chain and final answer, outputting a binary \texttt{CORRECT} or \texttt{INCORRECT} verdict.

\subsubsection{Reliability Analysis and Bias Mitigation}
To address concerns regarding potential bias from the LLM judge, we implemented the following audit and mitigation measures:

\textbf{1. Minimal Dependence:} The majority of samples are labeled by the deterministic parser, with the LLM judge required only for the remaining ambiguous instances. This limits the influence of any potential model bias to a small fraction of the dataset.

\textbf{2. Judge Audit:} To verify the reliability of the semantic judge, we performed a manual audit on a random subset of 100 trajectories labeled by the judge. We observed an agreement rate of over 95\% with human annotation, confirming that the instruction-tuned Llama-3-70B provides high-fidelity verdicts for these objective reasoning tasks.

\subsubsection{Final Labeling Protocol}
A trajectory is labeled as Correct if it satisfies both the strict programmatic matching criteria and receives a positive verdict from the LLM Judge. This hybrid approach ensures we capture correct reasoning, providing a robust dataset for geometric analysis.

\begin{figure}[p!] 
    \centering
    
    \begin{subfigure}{0.95\linewidth}
        \centering
        \includegraphics[width=\linewidth, height=0.27\textheight, keepaspectratio]{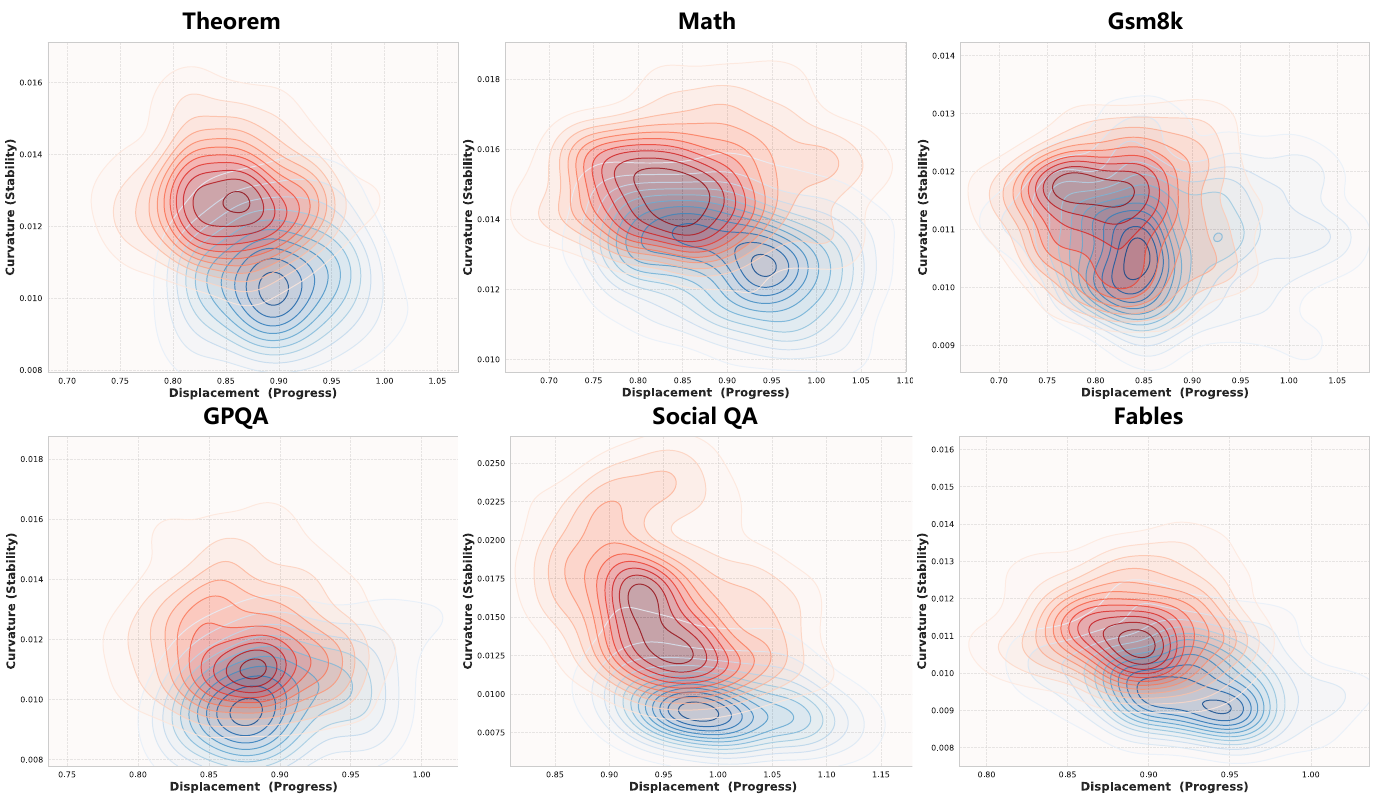}
        \caption{\textbf{Qwen2.5-7B-Instruct.} The joint distribution of displacement ($M$) and curvature ($K$) confirms the topological separation between correct and incorrect reasoning.}
        \label{fig:vis_qwen}
    \end{subfigure}
    
    \vfill 
    \begin{subfigure}{0.95\linewidth}
        \centering
        \includegraphics[width=\linewidth, height=0.27\textheight, keepaspectratio]{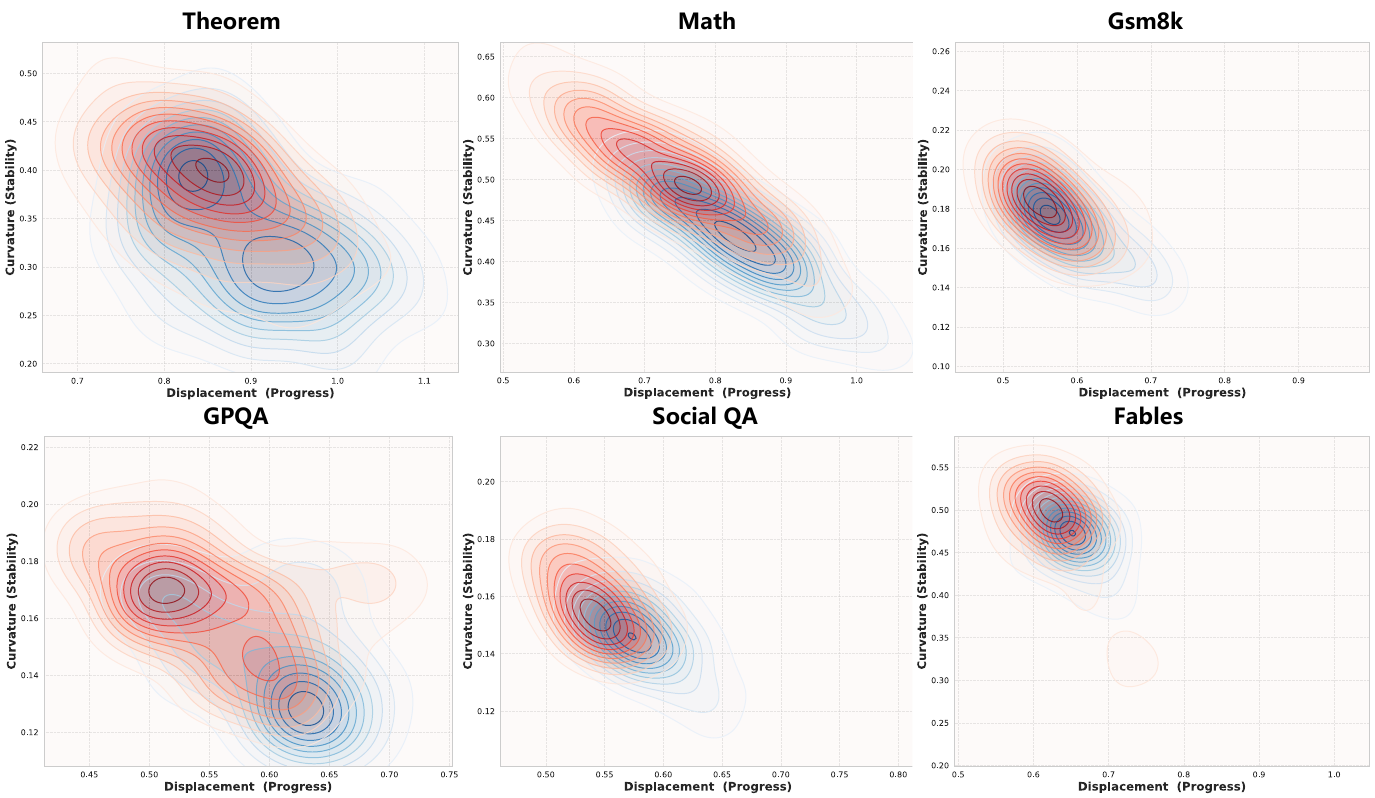}
        \caption{\textbf{Llama-3.1-8B-Instruct.} Similar to other models, Llama-3.1 exhibits a distinct separation where high-quality reasoning maximizes displacement while minimizing curvature.}
        \label{fig:vis_llama}
    \end{subfigure}
    
    \vfill
    \begin{subfigure}{0.95\linewidth}
        \centering
        \includegraphics[width=\linewidth, height=0.27\textheight, keepaspectratio]{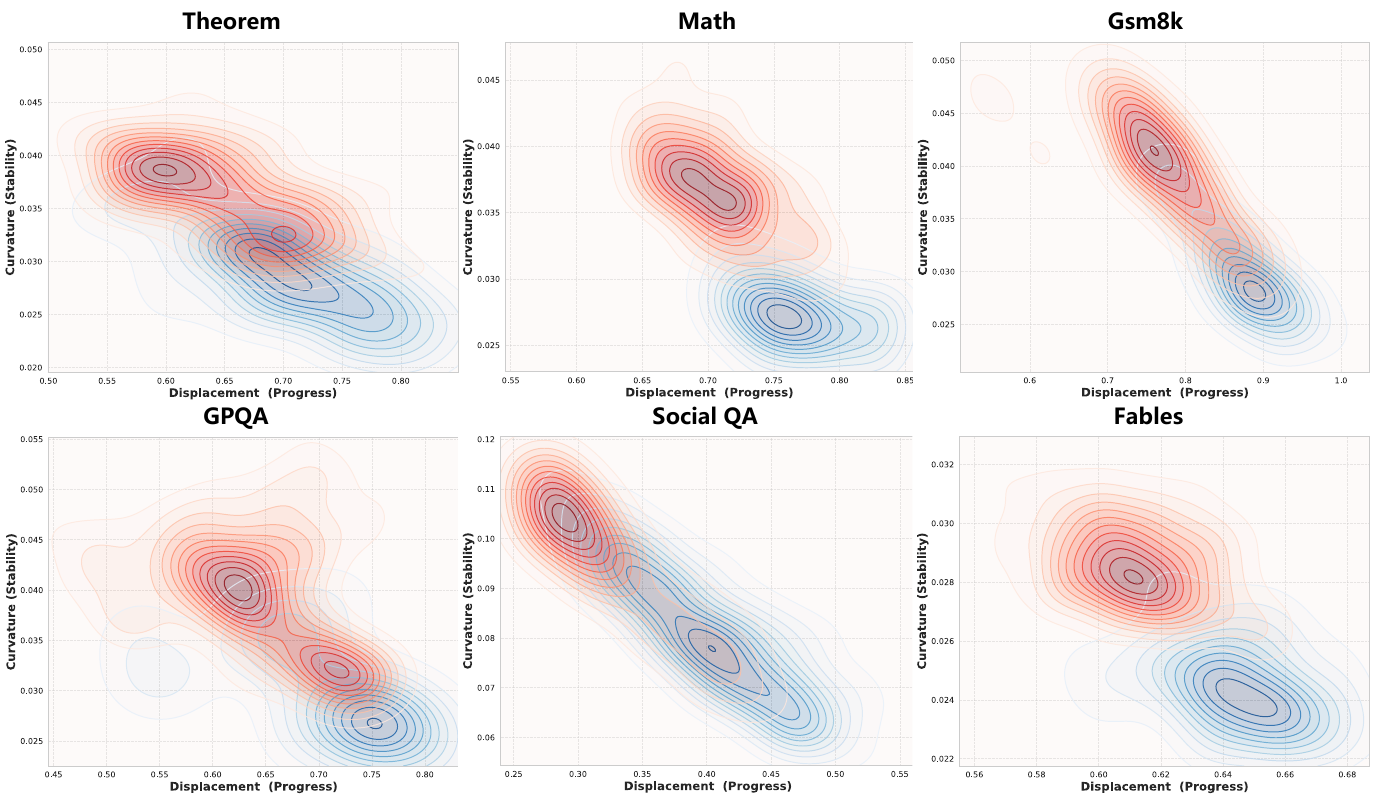}
        \caption{\textbf{Qwen3-4B-Thinking-2507.} Even for models specialized in thinking processes, the geometric distinction remains robust: hallucinations manifest as high-curvature oscillations.}
        \label{fig:vis_qwen_thinking}
    \end{subfigure}

    \caption{\textbf{Geometric Modes Across Different Models.} Visualization of the reasoning geometric signatures for Qwen2.5, Llama-3.1, and Qwen3-Thinking. Consistent topological separation is observed across all architectures.}
    \label{fig:geometric_modes_all}
\end{figure}

\section{Additional Geometric Visualizations across Models}
\label{app:additional_vis}

To demonstrate that the geometric signatures of reasoning quality (Displacement and Curvature) are robust to model architectures, we provide additional visualizations for three other state-of-the-art LLMs: \textbf{Qwen2.5-7B-Instruct}, \textbf{Llama-3.1-8B-Instruct}, and \textbf{Qwen3-4B-Thinking-2507}.

As shown in Figures \ref{fig:vis_qwen}, \ref{fig:vis_llama}, and \ref{fig:vis_qwen_thinking}, the topological separation observed in the main text (Figure \ref{fig:motivation_geometry}) is universally consistent. Across all models, correct reasoning traces (blue) are characterized by high displacement and low curvature, indicating effective semantic progress and logical stability. In contrast, incorrect traces (red) consistently exhibit the ``Hesitation Loop'' pattern (high curvature and low displacement) confirming that these geometric features are indicators of reasoning validity rather than model-specific artifacts.

\section{Baseline Implementation Details}
\label{app:baselines}

In this section, we provide detailed implementation specifications for the baseline methods used in our comparative evaluation.

\subsection{Output Probability Methods}
These unsupervised methods rely on scalar probability distributions output by the model, ignoring vector-space structures.

\paragraph{LN-Entropy (Length-Normalized Entropy) \citep{malinin2020uncertainty}.}
Calculates the average predictive entropy over the sequence to measure uncertainty, normalized by token length to mitigate verbosity bias.
\begin{equation}
    \mathcal{S}_{\text{Ent}} = \frac{1}{T} \sum_{t=1}^{T} \mathcal{H}(P(x_t | x_{<t}))
\end{equation}

\paragraph{MSP (Maximum Softmax Probability).}
Measures confidence by tracking the average log-probability of the generated tokens.
\begin{equation}
    \mathcal{S}_{\text{MSP}} = \frac{1}{T} \sum_{t=1}^{T} \log P(\hat{x}_t | x_{<t})
\end{equation}

\paragraph{Perplexity (PPL) \citep{si2022prompting}.}
Computes the exponentiated average negative log-likelihood, representing how "surprised" the model is by its own generation. Lower perplexity suggests higher fluency and confidence.

\subsection{Hidden State Probes}
These methods train supervised classifiers on internal activations to distinguish correct from incorrect reasoning.

\paragraph{LR Probe (Linear Regression Probe) \citep{alain2016understanding}.}
Trains a linear logistic regression classifier on the static hidden state of the final token $\mathbf{h}_{T}$. It tests whether reasoning correctness is linearly separable in the representation space.

\paragraph{SAPLMA \citep{azaria2023internal}.}
Utilizes a non-linear Multi-Layer Perceptron (MLP) classifier trained on hidden layer activations. Unlike linear probes, SAPLMA learns complex, non-linear decision boundaries to predict the "truthfulness" of a statement directly from the model's internal belief state.

\subsection{Trajectory Dynamics Methods}
These methods analyze the geometric or physical evolution of hidden states across layers.

\paragraph{Chain of Embedding (CoE) \citep{wang2024latent}.}
Evaluates the "Latent Thinking Path" by modeling the layer-wise geometric evolution of hidden states. It computes a correctness score based on the trajectory's \textbf{Magnitude} and \textbf{Angle}in the embedding space, positing that correct reasoning exhibits distinct topological detour patterns compared to hallucinations.

\paragraph{CoT-Kinetics \citep{bi2025cot}.}
Models the reasoning process as a particle moving through a semantic force field. It aggregates reasoning tokens to compute a "Kinetic Energy" score derived from \textbf{Semantic Momentum} (first-order state difference) and \textbf{Semantic Curvature} (second-order difference), regularized by output entropy. High energy signifies sound, progressive reasoning dynamics.

\textbf{Baselines and Controlled Experimental Protocol}
To rigorous evaluate TRACED, we compare against three categories of state-of-the-art methods. Crucially, to ensure a fair comparison, all supervised baselines were trained/calibrated on the exact same reference set $\mathcal{D}$, ensuring strict data parity.
For all baselines, we performed hyperparameter tuning via grid search on a held-out validation set to report their best performance.

\section{Cognitive Concept Extraction and State Identification}
\label{app:concept_extraction}

To bridge the gap between continuous hidden states and interpretable cognitive dynamics, we employ a vocabulary projection method to quantify the activation of specific cognitive concepts at each step.

\subsection{Concept Vocabulary Construction}
We define three distinct vocabularies corresponding to the cognitive states analyzed in our work: \textit{Reflection} ($S_{\text{ref}}$), \textit{Exploration} ($S_{\text{exp}}$), and \textit{Certainty} ($S_{\text{cer}}$). The word lists are curated to capture characteristic linguistic markers of each cognitive mode:

\begin{itemize}
    \item \textbf{Reflection ($S_{\text{ref}}$):} Keywords indicating self-correction, hesitation, or re-evaluation.
    \begin{quote}
    \textit{Vocabulary:} ``wait'', ``recheck'', ``check again'', ``rethink'', ``reconsider'', ``try again'', ``reexamine'', ``reevaluate'', ``think again'', ``consider again'', ``evaluate again'', ``examine again'', ``revisit''.
    \end{quote}
    
    \item \textbf{Exploration ($S_{\text{exp}}$):} Keywords signaling alternative reasoning paths or branching logic.
    \begin{quote}
    \textit{Vocabulary:} ``but'', ``however'', ``otherwise'', ``alternatively'', ``instead'', ``on the other hand'', ``another way'', ``try another'', ``different approach'', ``let's try'', ``or else'', ``by contrast''.
    \end{quote}
    
    \item \textbf{Certainty ($S_{\text{cer}}$):} Keywords representing logical progression, deduction, and definitive conclusions.
    \begin{quote}
    \textit{Vocabulary:} ``first'', ``second'', ``third'', ``then'', ``next'', ``after'', ``finally'', ``therefore'', ``thus'', ``hence'', ``so'', ``conclude'', ``infer'', ``deduce''.
    \end{quote}
\end{itemize}

\subsection{Activation Computation and State Assignment}
For a given token at time step $t$ with hidden state $\mathbf{h}_t \in \mathbb{R}^d$, we first project it into the vocabulary space using the pre-trained unembedding matrix $\mathbf{W}_U \in \mathbb{R}^{|\mathcal{V}| \times d}$ to obtain the logits $\mathbf{z}_t = \mathbf{W}_U \mathbf{h}_t$.

The activation score $A_t^{(c)}$ for a specific concept $c \in \{\text{ref}, \text{exp}, \text{cer}\}$ is computed by aggregating the logits of the words in its corresponding vocabulary set $S_c$. We utilize the maximum logit value within the set to represent the concept's intensity:
\begin{equation}
    A_t^{(c)} = \max_{w \in S_c} \mathbf{z}_t[w]
\end{equation}
where $\mathbf{z}_t[w]$ denotes the logit corresponding to word $w$.

Finally, the \textit{dominant cognitive state} $State_t$ for the token at step $t$ is identified by selecting the concept with the highest activation score:
\begin{equation}
    State_t = \operatorname*{argmax}_{c \in \{\text{ref}, \text{exp}, \text{cer}\}} A_t^{(c)}
\end{equation}
This token-level state sequence allows us to visualize and analyze the fluctuating cognitive dynamics throughout the reasoning chain.

\section{Theoretical Framework: The Stochastic Geometry of Reasoning}
\label{sec:theory}

We model the LLM's inference process as a dynamic evolution of hidden states on a semantic manifold. We formalize this process using Stochastic Differential Equations (SDEs) to derive the geometric signatures of reasoning validity.

\subsection{Reasoning Dynamics as Stochastic Flow}

While recent work characterizes ideal reasoning as a deterministic flow governed by logical structure \citep{zhou2025geometry}, real-world generation entails inherent epistemic uncertainty. We propose a stochastic formulation where the semantic state evolution is a superposition of a logical velocity field and epistemic noise.

\begin{definition}[Stochastic Reasoning Dynamics]
Let $\mathbf{z}_t \in \mathbb{R}^d$ denote the state of the model in the semantic space at step $t$. The evolution of the reasoning process is governed by the following It\^o Stochastic Differential Equation (SDE):
\begin{equation}
    d\mathbf{z}_t = \mathbf{v}_{\text{logic}}(\mathbf{z}_t) dt + \sigma d\mathbf{W}_t
    \label{eq:sde_dynamics}
\end{equation}
where $\mathbf{v}_{\text{logic}}(\mathbf{z}_t)$ is the \textbf{Semantic Velocity Field} driving logical deduction, $\mathbf{W}_t$ is a standard $d$-dimensional Wiener process representing epistemic uncertainty, and $\sigma > 0$ is the noise intensity.
\end{definition}

To characterize the geometry of these trajectories, we define the primary observable metric:

\begin{definition}[Net Displacement]
The \textbf{Net Displacement} $D(T)$ at time $T$ measures the Euclidean distance traversed from the initial state $\mathbf{z}_0$ in the semantic space:
\begin{equation}
    D(T) = \|\mathbf{z}_T - \mathbf{z}_0\|_2 = \left\| \int_{0}^{T} d\mathbf{z}_t \right\|_2
\end{equation}
In the discrete setting, this corresponds to the magnitude of the vector sum of update steps: $D(T) = \| \sum_{t=1}^{T} \Delta \mathbf{z}_t \|$.
\end{definition}

We analyze the evolution of $\mathbf{z}_t$ under two distinct regimes determined by the local Signal-to-Noise Ratio (SNR), $\rho = \|\mathbf{v}_{\text{logic}}\| / \sigma$.

\subsection{Regime I: Coherent Reasoning}
\label{sec:theory_high_snr}

We first analyze the scenario where the model possesses high confidence in its logical path.

\begin{assumption}[Logical Dominance]
\label{assump:high_snr}
In correct reasoning steps, the dynamics are dominated by the logical drift ($\rho \gg 1$). The semantic velocity field $\mathbf{v}_{\text{logic}}$ is locally Lipschitz continuous, satisfying $\langle \mathbf{v}_{\text{logic}}(\mathbf{z}_t), \mathbf{v}_{\text{logic}}(\mathbf{z}_{t+\delta}) \rangle \approx \|\mathbf{v}_{\text{logic}}\|^2$ for small $\delta$.
\end{assumption}

\begin{theorem}[Linear Displacement Scaling and Minimal Curvature]
\label{thm:linear_scaling}
Under Assumption \ref{assump:high_snr}, as $\sigma \to 0$, the reasoning trajectory exhibits linear displacement growth. The expected displacement scales linearly with time step $T$, and the local curvature vanishes:
\begin{equation}
    \mathbb{E}[\|\mathbf{z}_T - \mathbf{z}_0\|] \propto O(T), \quad \text{and} \quad \mathbb{E}[\kappa(\mathbf{z}_t)] \to 0
\end{equation}
In the context of empirical scaling laws, this implies a \textbf{structurally directed trajectory} where the log-log slope of displacement versus time is approximately 1:
\begin{equation}
    D(T) \propto T^1 \quad (\text{Log-Log Slope} \approx 1)
\end{equation}
\end{theorem}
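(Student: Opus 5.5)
The plan is to compare the SDE in \eqref{eq:sde_dynamics} with its deterministic skeleton $d\bar{\mathbf{z}}_t = \mathbf{v}_{\text{logic}}(\bar{\mathbf{z}}_t)\,dt$, started from the same $\mathbf{z}_0$. The two claims (linear displacement, vanishing curvature) are proved for the skeleton. They are then transferred to $\mathbf{z}_t$ by a perturbation bound that vanishes as $\sigma \to 0$.

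\textbf{Displacement of the skeleton.} Write
\begin{equation}
\bar{\mathbf{z}}_T - \mathbf{z}_0 = \int_0^T \mathbf{v}_{\text{logic}}(\bar{\mathbf{z}}_t)\,dt .
\end{equation}
Expanding the squared norm gives a double integral of $\langle \mathbf{v}_{\text{logic}}(\bar{\mathbf{z}}_s), \mathbf{v}_{\text{logic}}(\bar{\mathbf{z}}_t)\rangle$. Assumption~\ref{assump:high_snr} says each such inner product is approximately $\|\mathbf{v}_{\text{logic}}\|^2 =: v^2$, so the double integral is approximately $v^2 T^2$. Hence $\|\bar{\mathbf{z}}_T - \mathbf{z}_0\| \approx vT$.

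\textbf{Using the assumption over long horizons.} The assumption only controls small lags $\delta$. To cover all pairs $(s,t)$, I would read it as ``the unit direction $\hat{\mathbf{v}}$ changes by $o(1)$ per unit time along the flow, uniformly over the horizon.'' I would state that reading explicitly as the interpretation of the assumption. Under it, the cosine between any two velocities is bounded below by some $c > 0$, which gives
\begin{equation}
c\,vT \le \|\bar{\mathbf{z}}_T - \mathbf{z}_0\| \le vT ,
\end{equation}
and therefore $D(T) = \Theta(T)$ with log-log slope equal to $1$.

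\textbf{Transfer to the stochastic trajectory.} Set $\mathbf{e}_t = \mathbf{z}_t - \bar{\mathbf{z}}_t$. Using the Lipschitz constant $L$ of $\mathbf{v}_{\text{logic}}$ and Grönwall's inequality, $\sup_{t \le T}\|\mathbf{e}_t\| \le \sigma \sup_{t\le T}\|\mathbf{W}_t\|\, e^{LT}$. Doob's maximal inequality gives $\mathbb{E}\sup_{t\le T}\|\mathbf{W}_t\| \lesssim \sqrt{dT}$. Therefore
\begin{equation}
\mathbb{E}\|\mathbf{z}_T - \mathbf{z}_0\| = \|\bar{\mathbf{z}}_T - \mathbf{z}_0\| + O\!\left(\sigma\sqrt{dT}\,e^{LT}\right),
\end{equation}
and the error term vanishes as $\sigma \to 0$ for fixed $T$. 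This yields the $O(T)$ scaling.

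\textbf{Curvature.} I work with the discrete increments $\mathbf{v}_t = \Delta\mathbf{z}_t$ and $\mathbf{a}_t = \Delta\mathbf{z}_{t+1} - \Delta\mathbf{z}_t$. For the skeleton, $\mathbf{a}_t$ is at most the change of $\mathbf{v}_{\text{logic}}$ over one step, and the assumption makes its component orthogonal to $\mathbf{v}_t$ negligible. The numerator $\sqrt{\|\mathbf{v}\|^2\|\mathbf{a}\|^2 - (\mathbf{v}\cdot\mathbf{a})^2} = \|\mathbf{v}\|\,\|\mathbf{a}_\perp\|$ is therefore negligible, while the denominator is at least of order $v^3$. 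So the skeleton's curvature is approximately $0$.

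For $\mathbf{z}_t$, the noise contributes $O(\sigma)$ to both $\mathbf{v}_t$ and $\mathbf{a}_t$. Since $\rho \gg 1$ keeps $\|\mathbf{v}_t\|$ bounded away from $0$, $\kappa$ is a continuous function of $(\mathbf{v}_t,\mathbf{a}_t)$ in that region. Dominated convergence then gives $\mathbb{E}[\kappa] \to 0$.

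\textbf{Main obstacle.} The hard step is controlling $\kappa$ on the rare event where a noise fluctuation makes $\|\mathbf{v}_t\|$ small. There, $\kappa$ could blow up and break the dominated-convergence argument. I would first try to use the $\epsilon$ in the denominator, which bounds $\kappa \le \|\mathbf{v}\|\|\mathbf{a}\|/\epsilon$, an integrable Gaussian-tail quantity. Then I would split the expectation on the event $\{\|\mathbf{v}_t\| \ge v/2\}$, whose complement has probability $e^{-\Omega(\rho^2)}$. A secondary difficulty is the informal ``$\approx$'' in Assumption~\ref{assump:high_snr}, which needs the quantitative reading stated above to make the linear-growth constant $c$ explicit.
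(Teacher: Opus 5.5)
Your plan is sound at the level of rigor the statement admits. It shares the paper's core idea: drop the noise as $\sigma\to0$, then exploit alignment of the drift. The execution, however, is genuinely different.

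\textbf{How the paper argues.} The paper's proof replaces $\Delta\mathbf{z}_t$ by $\mathbf{v}_{\text{logic}}\Delta t$ with no error control. It obtains $\cos\theta_t\to1$ from alignment and measures curvature in the proof by the bounded proxy $\kappa=1-\cos\theta_t$. It gets $O(T)$ displacement from near-equality in the triangle inequality, writing $\mathbf{v}_{\text{logic}}$ without a time index, i.e.\ treating it as a constant vector.

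\textbf{What you do differently.} You couple $\mathbf{z}_t$ to the deterministic skeleton via Gr\"onwall and Doob. This makes explicit that the result is a limit $\sigma\to0$ at fixed $T$, non-uniform in $T$ because of $e^{LT}$. You expand $\|\bar{\mathbf{z}}_T-\mathbf{z}_0\|^2$ as a double integral, which exposes that a global alignment hypothesis is needed. You also use the extrinsic curvature of Section~\ref{sec:geometric_kinematics}, which is what TRACED actually computes.

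\textbf{Your main obstacle is milder than you fear.} The extrinsic curvature is why you face it while the paper, with its bounded proxy, does not. For $\epsilon>0$ the map $(\mathbf{v},\mathbf{a})\mapsto\kappa$ is continuous everywhere, and $\kappa\le\|\mathbf{a}_t\|\sup_{r\ge0}r/(r^3+\epsilon)=C\epsilon^{-2/3}\|\mathbf{a}_t\|$. Drive all $\sigma\le1$ with the same Brownian path. Your Gr\"onwall bound then gives pathwise convergence $\kappa_t^\sigma\to\bar\kappa_t$ and a $\sigma$-free integrable dominating variable. Dominated convergence finishes the argument with no event splitting. Note that the limit is the skeleton's curvature $\bar\kappa_t$, which is small but vanishes only for a straight flow; the paper makes the same conflation.

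\textbf{One inference needs repair.} A uniform per-unit-time bound on the turning of $\hat{\mathbf{v}}$ does not yield a pairwise cosine bound $c>0$ over an unbounded horizon. Take $\mathbf{v}_{\text{logic}}(\mathbf{z})=\eta J(\mathbf{z}-\mathbf{c})$ with $J=\mathbf{q}\mathbf{p}^\top-\mathbf{p}\mathbf{q}^\top$ for orthonormal $\mathbf{p},\mathbf{q}$. This field is globally Lipschitz and satisfies $\langle\mathbf{v}(\mathbf{z}_t),\mathbf{v}(\mathbf{z}_{t+\delta})\rangle=\|\mathbf{v}\|^2\cos(\eta\delta)\approx\|\mathbf{v}\|^2$. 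It even has small curvature $\eta/v$. Yet its orbits are circles of radius $v/\eta$, so $\|\bar{\mathbf{z}}_T-\mathbf{z}_0\|\le 2v/\eta$ for every $T$ and the log-log slope tends to $0$.

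So the global condition must be assumed, not read off from the small-lag assumption. One option: speeds in $[v_{\min},v_{\max}]$ and all velocities along the path within angle $\theta_0<\pi/2$ of a fixed unit vector $\mathbf{u}$. This gives $v_{\min}\cos\theta_0\,T\le\|\bar{\mathbf{z}}_T-\mathbf{z}_0\|\le v_{\max}T$ directly. Alternatively, restrict to $T\ll 1/\eta$. With pairwise cosines $\ge c$, your double integral actually gives $\sqrt{c}\,vT$, which is better than $c\,vT$.

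\textbf{Two smaller points.} First, a single Lipschitz constant $L$ in Gr\"onwall needs either a global Lipschitz hypothesis or a localization to a tube around the skeleton, since the assumption is only local. Second, what keeps $\|\mathbf{v}_t\|$ away from zero is $\rho\gg\sqrt d$, not $\rho\gg1$, because $\|\sigma\Delta\mathbf{W}_t\|\approx\sigma\sqrt d$. This is harmless for $\sigma\to0$ at fixed $d$, and your error term $\sigma\sqrt{dT}$ already reflects it. But the complement probability is roughly $e^{-\Omega((\rho-C\sqrt d)^2)}$, not $e^{-\Omega(\rho^2)}$.
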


\begin{proof}
In the limit $\sigma \to 0$, the update reduces to $\Delta \mathbf{z}_t \approx \mathbf{v}_{\text{logic}}(\mathbf{z}_t) \Delta t$. The Lipschitz continuity in Assumption \ref{assump:high_snr} implies that consecutive velocity vectors $\mathbf{v}_t$ and $\mathbf{v}_{t+1}$ are strongly aligned. Consequently, the cosine similarity between consecutive steps approaches 1:
\begin{equation}
    \cos(\theta_t) = \frac{\langle \mathbf{v}_t, \mathbf{v}_{t+1} \rangle}{\|\mathbf{v}_t\| \|\mathbf{v}_{t+1}\|} \to 1 \implies \kappa = 1 - \cos(\theta_t) \to 0
\end{equation}
Regarding displacement, due to the alignment of vectors, the Triangle Inequality for the net displacement approaches equality:
\begin{equation}
    \mathbb{E}[\|\mathbf{z}_T - \mathbf{z}_0\|] \approx \left\| \sum_{t=0}^{T-1} \mathbf{v}_{\text{logic}} \Delta t \right\| \approx \sum_{t=0}^{T-1} \|\mathbf{v}_{\text{logic}}\| \Delta t \propto O(T)
\end{equation}
This confirms the linear scaling of the trajectory.
\end{proof}

\begin{remark}[Directedness of Valid Reasoning]
Theorem \ref{thm:linear_scaling} implies that valid reasoning is structurally \textbf{directed}. Even when the chain-of-thought is extended (i.e., "Enough Thinking"), the trajectory does not meander; it actively traverses the semantic space towards a solution. The low curvature signifies a confident maintenance of the logical thread.
\end{remark}

\subsection{Regime II: Hallucination and Stagnation }
\label{sec:theory_low_snr}

Conversely, when the model lacks the necessary knowledge or logical connection, the deterministic driver collapses.

\begin{assumption}[Logical Collapse]
\label{assump:low_snr}
In incorrect reasoning or hallucination, the logical velocity field vanishes ($\|\mathbf{v}_{\text{logic}}\| \approx 0$), rendering the dynamics noise-dominated ($\rho \ll 1$). The update step approximates an isotropic Gaussian noise: $\Delta \mathbf{z}_t \sim \mathcal{N}(0, \sigma^2 \mathbf{I}_d)$.
\end{assumption}

To derive the geometric properties in this regime, we leverage the property of high-dimensional spaces.

\begin{lemma}[High-Dimensional Orthogonality \citep{vershynin2018high}]
\label{lemma:orthogonality}
Let $\mathbf{x}, \mathbf{y} \sim \mathcal{N}(0, \mathbf{I}_d)$ be independent random vectors in $\mathbb{R}^d$. As $d \to \infty$, the vectors become nearly orthogonal with high probability:
\begin{equation}
    \lim_{d \to \infty} \mathbb{P}\left( \left| \frac{\langle \mathbf{x}, \mathbf{y} \rangle}{\|\mathbf{x}\| \|\mathbf{y}\|} \right| < \epsilon \right) = 1
\end{equation}
\end{lemma}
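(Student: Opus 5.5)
The plan is to show that the cosine $C_d = \langle \mathbf{x}, \mathbf{y}\rangle / (\|\mathbf{x}\|\|\mathbf{y}\|)$ concentrates at $0$ at rate $1/\sqrt{d}$, and then apply Chebyshev's inequality. The argument uses two standard facts: rotation invariance of the Gaussian, and concentration of the norm.

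\textbf{Step 1 (reduction by rotation invariance).} Since $\mathbf{x}$ and $\mathbf{y}$ are independent and $\mathcal{N}(0,\mathbf{I}_d)$ is rotation invariant, we may condition on $\mathbf{y}$ and rotate so that $\mathbf{y}/\|\mathbf{y}\| = \mathbf{e}_1$. Then $C_d$ has the same law as $x_1/\|\mathbf{x}\|$, that is, the first coordinate of a uniformly random point $\mathbf{u} = \mathbf{x}/\|\mathbf{x}\|$ on the unit sphere $S^{d-1}$. This step also shows that the event $\mathbf{x}=0$ or $\mathbf{y}=0$ has probability zero, so $C_d$ is almost surely well defined.

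\textbf{Step 2 (second moment and Chebyshev).} By symmetry among the coordinates of $\mathbf{u}$ and the constraint $\sum_i u_i^2 = 1$, we get $\mathbb{E}[u_1^2] = 1/d$. Chebyshev's inequality then gives
\begin{equation*}
\mathbb{P}(|C_d| \ge \epsilon) \le \frac{\mathbb{E}[u_1^2]}{\epsilon^2} = \frac{1}{d\epsilon^2} \to 0 ,
\end{equation*}
which is exactly the claimed limit. This route avoids any norm concentration estimate. If a quantitative high-probability version is wanted, in the spirit of \citet{vershynin2018high}, one can instead write $C_d = g/(\|\mathbf{x}\|)$ with $g = \langle \mathbf{x}, \mathbf{y}/\|\mathbf{y}\|\rangle \sim \mathcal{N}(0,1)$. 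Sub-Gaussian concentration of $\|\mathbf{x}\|$ around $\sqrt{d}$ then yields $|C_d| \lesssim t/\sqrt{d}$ with probability at least $1 - 2e^{-t^2/2} - 2e^{-cd}$.

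\textbf{Expected main obstacle.} The only delicate point is justifying the symmetry step $\mathbb{E}[u_1^2]=1/d$, equivalently the uniformity of $\mathbf{x}/\|\mathbf{x}\|$ on the sphere. This follows from orthogonal invariance of the standard Gaussian density, which depends on $\mathbf{x}$ only through $\|\mathbf{x}\|$. I would state that fact explicitly rather than leave it implicit, because everything else reduces to Chebyshev's inequality.
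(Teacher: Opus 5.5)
Your proof is correct, but it is not the route the paper points to. The paper gives no proof of its own and defers to \citet{vershynin2018high}. There, near-orthogonality comes from the unnormalized second moments $\mathbb{E}\|\mathbf{x}\|^2 = \mathbb{E}\langle \mathbf{x},\mathbf{y}\rangle^2 = d$ together with concentration of $\|\mathbf{x}\|$ and $\|\mathbf{y}\|$ around $\sqrt{d}$, which is essentially the quantitative variant you sketch at the end. Your main route instead takes the second moment of the already normalized cosine. The denominator therefore never has to be controlled, and you get the exact identity $\mathbb{E}[C_d^2] = 1/d$.

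This is more elementary and proves exactly the stated limit. Being non-asymptotic, it also makes visible that in the $k=8$ dimensional subspace where the paper's $\mathbf{z}_t$ are actually computed, the root-mean-square cosine is still $1/\sqrt{8}\approx 0.35$. So the asymptotic statement is only a rough approximation at the paper's working dimension.

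What the concentration route buys is the tail. Chebyshev gives only $1/(d\epsilon^2)$. If you need many pairs to be nearly orthogonal simultaneously (say all $T$ consecutive step pairs of a trajectory), a union bound then forces $d \gg T/\epsilon^2$. A sub-Gaussian bound of order $e^{-c d\epsilon^2}$ needs only $d \gtrsim \epsilon^{-2}\log T$.

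Finally, the step you flag as the main obstacle is lighter than you suggest. The identity $\mathbb{E}[u_1^2] = 1/d$ uses only exchangeability of the i.i.d.\ coordinates, not uniformity on the sphere. Even the rotation can be skipped: sign-flip and permutation symmetry give $\mathbb{E}[\mathbf{u}\mathbf{u}^\top] = \mathbf{I}_d/d$. By independence, $\mathbb{E}[C_d^2] = \mathbb{E}[\mathbf{w}^\top \mathbb{E}[\mathbf{u}\mathbf{u}^\top]\,\mathbf{w}] = 1/d$ with $\mathbf{w} = \mathbf{y}/\|\mathbf{y}\|$.
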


\begin{theorem}[Sub-linear Displacement Scaling and Maximal Curvature]
\label{thm:sublinear_scaling}
Under Assumption \ref{assump:low_snr} and Lemma \ref{lemma:orthogonality}, the reasoning trajectory degenerates into a high-dimensional random walk. The expected displacement exhibits sub-linear scaling, and the curvature is maximal:
\begin{equation}
    \|\mathbf{z}_T - \mathbf{z}_0\|_{\text{RMS}} \propto O(\sqrt{T}), \quad \text{and} \quad \mathbb{E}[\kappa(\mathbf{z}_t)] \approx 1
\end{equation}
This corresponds to \textbf{random walk dynamics}, where the displacement scales with the square root of time:
\begin{equation}
    D(T)_{\text{RMS}} \propto \sqrt{T} = T^{0.5} \quad (\text{Log-Log Slope} \approx 0.5)
\end{equation}
\end{theorem}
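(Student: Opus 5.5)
Under Assumption \ref{assump:low_snr} the increments $\Delta\mathbf{z}_t$ are i.i.d.\ $\mathcal{N}(0,\sigma^2\mathbf{I}_d)$. The trajectory is therefore a Gaussian random walk, and both claims reduce to second-moment computations plus Lemma \ref{lemma:orthogonality}. I will treat displacement and curvature separately. As in the proof of Theorem \ref{thm:linear_scaling}, curvature is read as the turning quantity $\kappa_t = 1-\cos\theta_t$, with $\theta_t$ the angle between consecutive steps $\mathbf{v}_t=\Delta\mathbf{z}_t$ and $\mathbf{v}_{t+1}=\Delta\mathbf{z}_{t+1}$.

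\textbf{Displacement.} Write $\mathbf{z}_T-\mathbf{z}_0=\sum_{t=1}^T\Delta\mathbf{z}_t$ and expand the squared norm:
\begin{equation}
\mathbb{E}\|\mathbf{z}_T-\mathbf{z}_0\|^2=\sum_{t=1}^T\mathbb{E}\|\Delta\mathbf{z}_t\|^2+\sum_{s\neq t}\mathbb{E}\langle\Delta\mathbf{z}_s,\Delta\mathbf{z}_t\rangle .
\end{equation}
By independence and zero mean, each cross term vanishes. Each diagonal term equals $\mathrm{tr}(\sigma^2\mathbf{I}_d)=d\sigma^2$, so $\|\mathbf{z}_T-\mathbf{z}_0\|_{\text{RMS}}=\sigma\sqrt{dT}\propto\sqrt{T}$, which gives log-log slope $1/2$. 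This step is exact and needs no limit. I will also remark that, because $\mathbf{z}_T-\mathbf{z}_0\sim\mathcal{N}(0,T\sigma^2\mathbf{I}_d)$, the quantity $\mathbb{E}\|\mathbf{z}_T-\mathbf{z}_0\|=\sigma\sqrt{T}\,\mathbb{E}\chi_d$ also scales as $\sqrt{T}$. Hence the normalized $M_n$ in Sec.~\ref{sec:geometric_kinematics} decays like $T^{-1/2}$, in contrast with the constant $M_n$ of Regime I.

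\textbf{Curvature.} Consecutive increments $\mathbf{v}_t,\mathbf{v}_{t+1}$ are independent standard Gaussians up to the common scale $\sigma$, which cancels in the cosine. Lemma \ref{lemma:orthogonality} then gives $\cos\theta_t\to0$ in probability as $d\to\infty$. To pass to expectations, note that $|\cos\theta_t|\le1$, so $\kappa_t\in[0,2]$ is bounded and bounded convergence yields $\mathbb{E}[\kappa_t]\to1$. For a quantitative version I would use the symmetry $\mathbf{v}\mapsto-\mathbf{v}$ to get $\mathbb{E}\cos\theta_t=0$ exactly, which makes $\mathbb{E}\kappa_t=1$ for every $d$. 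The concentration $\mathrm{Var}(\cos\theta_t)=1/d$ (the cosine is the first coordinate of a uniform point on the sphere) then shows that $\kappa_t$ concentrates at $1$.

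The \textbf{main obstacle} is the curvature claim, not the displacement one. If curvature is taken literally as the discrete $\kappa_{n,t}$ of Sec.~\ref{sec:geometric_kinematics}, the ratio $\|\mathbf{v}\times\mathbf{a}\|/\|\mathbf{v}\|^3$ is not scale-free: for orthogonal increments of norm about $\sigma\sqrt d$ it behaves like $1/(\sigma\sqrt d)$. Only the angular form $1-\cos\theta_t$ is bounded, and that form is the one the proof of Theorem \ref{thm:linear_scaling} adopts. I will therefore state the result for that normalized turning measure and note that it is "maximal" relative to Regime I, where the same quantity tends to $0$. I will also note that the sine of the turning angle, $\sqrt{1-\cos^2\theta_t}$, tends to $1$. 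This makes the extrinsic-curvature version, after normalization by the step length, equally large, which justifies reading it as maximal instability.
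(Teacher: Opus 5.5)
Your proposal is correct and follows the paper's proof. Displacement comes from the mean-squared-displacement expansion with cross terms vanishing by independence, giving $\mathbb{E}\|\mathbf{z}_T-\mathbf{z}_0\|^2=Td\sigma^2$, and curvature comes from Lemma~\ref{lemma:orthogonality} giving $\cos\theta_t\approx 0$, hence $\kappa=1-\cos\theta_t\approx 1$. Your additions tighten that argument without changing the route: the exact identity $\mathbb{E}[\kappa_t]=1$ from sign symmetry, the bounded-convergence step, the $1/d$ variance, and your valid caveat that the literal $\kappa_{n,t}$ of Sec.~\ref{sec:geometric_kinematics} behaves like $1/(\sigma\sqrt{d})$ and is not scale-free.
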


\begin{proof}
\textit{Curvature:} By Lemma \ref{lemma:orthogonality}, consecutive noise-dominated steps $\Delta \mathbf{z}_t$ and $\Delta \mathbf{z}_{t-1}$ are inherently orthogonal. Thus, the cosine similarity is 0, and curvature $\kappa = 1 - \cos(\theta) \approx 1$.

\textit{Displacement:} For a sum of $T$ i.i.d. zero-mean random vectors, the net displacement vector is $\mathbf{D}_T = \sum_{t=1}^T \Delta \mathbf{z}_t$. We compute the Mean Squared Displacement (MSD):
\begin{equation}
    \mathbb{E}[\|\mathbf{D}_T\|^2] = \sum_{t=1}^T \mathbb{E}[\|\Delta \mathbf{z}_t\|^2] + \sum_{i \neq j} \mathbb{E}[\langle \Delta \mathbf{z}_i, \Delta \mathbf{z}_j \rangle]
\end{equation}
The cross-terms vanish because independent noise steps are uncorrelated. The remaining sum consists of $T$ variance terms:
\begin{equation}
    \mathbb{E}[\|\mathbf{D}_T\|^2] = T \cdot d\sigma^2 \propto O(T)
\end{equation}
Taking the square root yields the Root Mean Square (RMS) displacement, which scales as $O(\sqrt{T})$. Compared to the linear scaling $O(T)$ in Regime I, this indicates a suppression of effective progress.
\end{proof}

\begin{remark}[Thinking Duration vs. Reasoning Progress]
\label{rem:enough_thinking}
Theorem \ref{thm:sublinear_scaling} provides a geometric interpretation of the \textbf{"Enough Thinking"} phenomenon. A long context length (large $T$) is necessary but not sufficient for correct reasoning. If the trajectory follows sub-linear scaling ($O(\sqrt{T})$), the model is essentially expending computational steps without making proportional semantic progress. TRACED detects this state of geometric inefficiency; high curvature and low displacement efficiency reveal that despite the length of the thought chain, effective reasoning has ceased.
\end{remark}

\section{Experimental Setup Details for Scaling Analysis}
\label{app:dataset_details}

To empirically validate the kinematic scaling laws, we conducted a large-scale analysis using the \textbf{DeepSeek-R1-Llama-8B} model, which supports extended chain-of-thought generation.

\textbf{Dataset Construction.} We curated a diverse evaluation set spanning six domains to ensure universality: \textit{GSM8K} and \textit{MATH} (Mathematics), \textit{TheoremQA} and \textit{GPQA} (Logic \& Science), and \textit{SocialIQA} and \textit{Fables} (General Reasoning). 

\textbf{Sampling Strategy.} To capture the natural diversity of reasoning paths, we employed nucleus sampling with temperature $T=0.7$ and top-$p=0.95$. For each query in the dataset, we generated $N=16$ independent reasoning trajectories. 

\textbf{Labeling and Grouping.} Trajectories were classified into two groups based on the correctness of the final answer:
\begin{itemize}
    \item \textbf{Valid Group:} Trajectories leading to the correct ground-truth answer.
    \item \textbf{Hallucination Group:} Trajectories leading to incorrect answers.
\end{itemize}

\textbf{Binning and Metrics.} Due to the variable length of reasoning chains (ranging from hundreds to over 10,000 tokens), we applied a linear binning strategy with a bin size of 200 tokens. For each bin $T \in [200, 400, \dots, 16000]$, we collected all trajectories with lengths falling within $T \pm 10000$ and computed the average Net Displacement $D(T)$ for both groups. This aggregated data was used to plot the scaling curves shown in Figure \ref{fig:scaling_law}.

\section{Ablation Study: Geometric Component Analysis}
\label{sec:ablation}

In this section, we conduct an ablation study to investigate the contribution of the two core geometric signatures, \textbf{Normalized Net Displacement ($M_n$)} and \textbf{Average Trajectory Curvature ($K_n$)}, to the overall assessment performance. We evaluate three configurations: (1) \textit{Displacement Only}, (2) \textit{Curvature Only}, and (3) \textit{TRACED (Full)}, which integrates both features.

\begin{table*}[t]
\centering
\renewcommand{\arraystretch}{0.85} 
\setlength{\tabcolsep}{4pt}        
\caption{Ablation study of geometric components (AUROC$\uparrow$). We compare the performance of using individual geometric features ($M_n$ only, $K_n$ only) versus the combined TRACED framework across four representative models.}
\label{tab:ablation_results}

\resizebox{0.55\textwidth}{!}{%
\begin{tabular}{lcccccccc}
\toprule
\textbf{Model} & \textbf{Mag. ($M_n$)} & \textbf{Ang. ($K_n$)} & \textbf{Fables} & \textbf{GPQA} & \textbf{GSM8K} & \textbf{MATH} & \textbf{Soc\_IQA} & \textbf{Thrm.} \\
\midrule
\rowcolor{gray!10} \multicolumn{9}{l}{\textit{DeepSeek-R1-Llama-8B}} \\
& \checkmark & & 0.6845 & 0.7812 & 0.7634 & 0.7012 & 0.7122 & 0.8245 \\
& & \checkmark & 0.6512 & 0.7244 & 0.7188 & 0.6855 & 0.6945 & 0.7912 \\
\rowcolor{iceblue!20} TRACED & \checkmark & \checkmark & \textbf{0.7191} & \textbf{0.8300} & \textbf{0.8061} & \textbf{0.7489} & \textbf{0.7536} & \textbf{0.8730} \\
\midrule
\rowcolor{gray!10} \multicolumn{9}{l}{\textit{Qwen3-4B-Thinking-2507}} \\
& \checkmark & & 0.6750 & 0.6620 & 0.7410 & 0.8120 & 0.6830 & 0.7215 \\
& & \checkmark & 0.6420 & 0.6315 & 0.7055 & 0.7945 & 0.6540 & 0.6890 \\
\rowcolor{iceblue!20} TRACED & \checkmark & \checkmark & \textbf{0.7088} & \textbf{0.7050} & \textbf{0.7825} & \textbf{0.8495} & \textbf{0.7194} & \textbf{0.7638} \\
\midrule
\rowcolor{gray!10} \multicolumn{9}{l}{\textit{Llama-3.1-8B-Instruct}} \\
& \checkmark & & 0.6122 & 0.6845 & 0.7012 & 0.5844 & 0.6512 & 0.5988 \\
& & \checkmark & 0.5945 & 0.6512 & 0.6855 & 0.5722 & 0.6433 & 0.5844 \\
\rowcolor{iceblue!20} TRACED & \checkmark & \checkmark & \textbf{0.6676} & \textbf{0.7344} & \textbf{0.7556} & \textbf{0.6363} & \textbf{0.7213} & \textbf{0.6550} \\
\midrule
\rowcolor{gray!10} \multicolumn{9}{l}{\textit{Qwen2.5-7B-Instruct}} \\
& \checkmark & & 0.5840 & 0.7120 & 0.6530 & 0.6940 & 0.7320 & 0.7240 \\
& & \checkmark & 0.5515 & 0.6850 & 0.6215 & 0.6625 & 0.7015 & 0.6955 \\
\rowcolor{iceblue!20} TRACED & \checkmark & \checkmark & \textbf{0.6238} & \textbf{0.7636} & \textbf{0.6956} & \textbf{0.7305} & \textbf{0.7794} & \textbf{0.7752} \\
\bottomrule
\end{tabular}}
\vspace{-0.3cm} 
\end{table*}

\textbf{Analysis of Component Synergy.} 
As demonstrated in Table \ref{tab:ablation_results}, the integration of both displacement and curvature consistently yields the highest AUROC across all models and datasets. Specifically:

\begin{itemize}
    \item \textbf{Displacement ($M_n$)} serves as a primary indicator of ``semantic progress.'' It is particularly effective in identifying reasoning chains that stall or loop , providing a strong baseline for quality detection.
    \item \textbf{Curvature ($K_n$)} captures the ``logical stability'' of the trajectory, it provides crucial information about semantic hesitation and sudden directional shifts (hallucinations), which displacement alone might miss if the model moves confidently in a wrong direction.
    \item \textbf{Synergistic Effect:} The combination of the two (TRACED) significantly outperforms individual components, especially on complex reasoning tasks like \textit{MATH} and \textit{GPQA}. This confirms that reasoning quality is a multi-dimensional geometric property where both the \textit{distance covered} (progress) and the \textit{smoothness of the path} (stability) are essential for faithful assessment.
\end{itemize}

\section{Robustness Analysis: Sensitivity to Data Imbalance}
\label{sec:ratio_robustness}

Since TRACED operates within a probabilistic Bayesian framework, the decision rule implicitly relies on the ratio of positive to negative samples (class prior). While we adopt a non-informative uniform prior ($P(y=1)=P(y=0)=0.5$) to ensure task universality, real-world reasoning scenarios often exhibit skewed distributions. To assess the robustness of TRACED against such distributional shifts, we conducted a controlled stress test by synthetically varying the data ratio.

\textbf{Experimental Setup.}
Let $\mathcal{D}_{test}$ be the evaluation set. We define the \textit{Positive Data Ratio} $\alpha$ as the proportion of correct reasoning chains in the test batch: $\alpha = N_{pos} / (N_{pos} + N_{neg})$.
We varied $\alpha$ from $0.1$ to $0.9$ with a step size of $0.1$. For each target ratio $\alpha$, we performed stratified resampling on the original test sets of the six benchmarks. To ensure statistical significance, we report the AUROC averaged across all six datasets for each of the four models. The setting $\alpha=0.5$ corresponds to the balanced evaluation reported in our main results.

\textbf{Results and Analysis.}
The performance trajectories under varying $\alpha$ are summarized in Table \ref{tab:data_ratio}. 

\begin{enumerate}
    \item \textbf{Stability Region ($\alpha \in [0.3, 0.7]$):} TRACED demonstrates remarkable stability when the data ratio fluctuates within the moderate range. This indicates that the geometric signatures ($M_n, K_n$) are robust enough to separate classes even when the priors are not perfectly calibrated.
    
    \item \textbf{Performance Degradation at Extremes ($\alpha < 0.2$ or $\alpha > 0.8$):} As hypothesized, performance drops in extreme imbalance scenarios. For example, at $\alpha=0.1$ (severe hallucination dominance), the AUROC for \textit{Llama-3.1-8B} decreases by approximately $6\%$ compared to the balanced setting. 
\end{enumerate}

\begin{table}[h]
\centering
\caption{\textbf{Data Ratio Robustness.} Average AUROC scores across six datasets under varying Positive Data Ratios ($\alpha$). TRACED maintains robust performance in the moderate range ($0.3 \le \alpha \le 0.7$) but exhibits expected sensitivity at extreme imbalances due to the fixed uniform prior assumption.}
\label{tab:data_ratio}
\resizebox{0.65\linewidth}{!}{%
\begin{tabular}{l|ccccccccc|c}
\toprule
\textbf{Positive Ratio ($\alpha$)} & \textbf{0.1} & \textbf{0.2} & \textbf{0.3} & \textbf{0.4} & \textbf{0.5} (Main) & \textbf{0.6} & \textbf{0.7} & \textbf{0.8} & \textbf{0.9} & \textbf{$\Delta_{Max}$} \\ 
\midrule
\textit{DeepSeek-R1-Llama-8B} & 0.742 & 0.775 & \textbf{0.786} & \textbf{0.791} & \textbf{0.806} & \textbf{0.803} & \textbf{0.788} & 0.778 & 0.735 & -0.071 \\
\textit{Qwen3-4B-Thinking} & 0.685 & 0.710 & \textbf{0.735} & \textbf{0.742} & \textbf{0.748} & \textbf{0.745} & \textbf{0.730} & 0.705 & 0.672 & -0.076 \\
\textit{Llama-3.1-8B-Instruct} & 0.621 & 0.645 & \textbf{0.660} & \textbf{0.665} & \textbf{0.668} & \textbf{0.664} & \textbf{0.658} & 0.635 & 0.610 & -0.058 \\
\textit{Qwen2.5-7B-Instruct} & 0.655 & 0.698 & \textbf{0.708} & \textbf{0.715} & \textbf{0.720} & \textbf{0.718} & \textbf{0.702} & 0.680 & 0.648 & -0.072 \\
\bottomrule
\end{tabular}}
\vspace{-0.2cm}
\end{table}

\textbf{Theoretical Interpretation:} This degradation is theoretically expected. Our decision rule assumes a uniform prior ($P(y)=0.5$); however, at $\alpha=0.9$, the true optimal log-odds prior term should be $\log(0.9/0.1) \approx 2.2$. By fixing the prior to $0$, the model effectively under-trusts the majority class, leading to a shift in the False Positive/Negative trade-off. Nevertheless, TRACED avoids catastrophic collapse, retaining valid discriminative power even under these adversarial distribution shifts.

\textbf{Conclusion.} While extreme imbalance introduces a prior mismatch penalty, TRACED remains reliable across the plausible range of model capabilities, affirming its practical applicability without requiring test-time prior recalibration.

\section{Data Efficiency Analysis: Sensitivity to Reference Set Size}
\label{sec:data_size_sensitivity}

Unlike supervised probes that require extensive training data, TRACED relies on a \textit{Reference Set} to calibrate the moments ($\mu_c, \Sigma_c$) of the class-conditional Gaussian distributions. We investigate the minimum sample size required to robustly estimate these geometric statistics.

\textbf{Experimental Setup.}
To ensure a strictly fair comparison, we adopted a fixed hold-out evaluation strategy. From the total budget of $N_{total}=1,000$ samples per dataset, we reserved a fixed \textbf{Evaluation Set} of $200$ samples ($20\%$). The remaining $800$ samples serve as the \textbf{Reference Pool}.
We define the \textit{Sampling Ratio} $\gamma$ as the proportion of this pool used for calibration ($\gamma \in [0.1, 1.0]$). Crucially, the evaluation set remains \textbf{identical} across all configurations, and we maintain a balanced positive-to-negative ratio ($1:1$) within the reference sets to isolate the impact of sample size.

\textbf{Results and Discussion.}
The performance trajectories are summarized in Table \ref{tab:size_sensitivity}. We observe a distinct  behavior:

\begin{enumerate}
    \item \textbf{Sensitivity at Low Data Regime ($\gamma < 0.5$):} In the low-data regime (e.g., $N=80 \sim 320$), we observe a noticeable performance gap. For instance, at $\gamma=0.2$, the AUROC for \textit{DeepSeek-R1} lags by approximately $4\%$ compared to the full setting. This aligns with statistical theory: estimating the covariance matrix $\Sigma_c$ in high-dimensional space requires sufficient samples to avoid ill-conditioning and noise sensitivity.
    
    \item \textbf{Stability Plateau ($\gamma \ge 0.5$):} Performance stabilizes significantly once the reference set size reaches approximately $400$ samples ($\gamma=0.5$). Beyond this point, increasing the data to $800$ samples ($\gamma=1.0$) yields only marginal gains. This suggests that $N \approx 400$ serves as a sufficient effective sample size to capture the converged geometric topology of reasoning, making TRACED data-efficient compared to methods requiring thousands of training examples.
\end{enumerate}

\begin{table*}[h]
\centering
\caption{\textbf{Sensitivity to Reference Set Size.} Average AUROC scores across six datasets. The evaluation set is fixed ($N_{test}=200$). We vary the reference set size from $\gamma=0.1$ ($N=80$) to $\gamma=1.0$ ($N=800$, Main Result). The method reaches a stability plateau around $\gamma=0.5$ ($400$ samples), indicating the minimum data requirement for robust covariance estimation.}
\label{tab:size_sensitivity}
\resizebox{0.65\textwidth}{!}{%
\begin{tabular}{lcccccccc}
\toprule
\textbf{Ref. Ratio ($\gamma$)} & \textbf{0.1} & \textbf{0.2} & \textbf{0.3} & \textbf{0.4} & \textbf{0.5} & \textbf{0.6} & \textbf{0.8} & \textbf{1.0 (Main)} \\
\textbf{Sample Count ($N$)} & \textbf{80} & \textbf{160} & \textbf{240} & \textbf{320} & \textbf{400} & \textbf{480} & \textbf{640} & \textbf{800} \\
\midrule
\textit{DeepSeek-R1-Llama-8B} & 0.745 & 0.762 & 0.781 & 0.792 & \textbf{0.803} & \textbf{0.804} & \textbf{0.805} & \textbf{0.806} \\
\textit{Qwen3-4B-Thinking} & 0.6540 & 0.685 & 0.712 & 0.741 & \textbf{0.746} & \textbf{0.747} & \textbf{0.748} & \textbf{0.748} \\
\textit{Llama-3.1-8B-Instruct} & 0.615 & 0.632 & 0.650 & 0.652 & \textbf{0.656} & \textbf{0.667} & \textbf{0.670} & \textbf{0.672} \\
\textit{Qwen2.5-7B-Instruct} & 0.636 & 0.665 & 0.702 & 0.712 & \textbf{0.717} & \textbf{0.719} & \textbf{0.719} & \textbf{0.722} \\
\bottomrule
\end{tabular}}
\vspace{-0.2cm}
\end{table*}

\section{Deployment Efficiency and  Transferability}
\label{sec:efficiency}

For a reasoning evaluation metric to be practically viable, it must minimize two types of costs: (1) \textbf{Inference Latency} (time complexity per query) and (2) \textbf{Adaptation Cost} (data requirements for new domains). TRACED demonstrates efficiency in both dimensions compared to existing baselines.

\subsection{Computational Overhead: Millisecond-Level Latency}
Existing uncertainty estimation methods often suffer from significant bottlenecks. Sampling-based methods (e.g., Self-Consistency) require $K$ additional LLM inferences, making the cost proportional to $\mathcal{O}(K \cdot T_{gen})$. Supervised probes (e.g., MLPs) necessitate extracting high-dimensional hidden states ($\mathbb{R}^{4096}$) and performing dense matrix multiplications. In contrast, TRACED operates on a strictly lightweight geometric basis. Once the hidden states are obtained, computing the Net Displacement ($M_n$) and Curvature ($K_n$) involves only basic vector addition and dot product operations, requiring no additional forward passes or heavy computation, ensuring high-throughput deployment.

\subsection{Data Efficiency via Geometric Universality}
While TRACED utilizes a reference set to calibrate the class-conditional Gaussians, our analysis uncovers that these geometric signatures possess strong \textbf{Universality and Cross-Domain Robustness}, drastically reducing the cost of adapting to new tasks. As visualized in Figure \ref{fig:cross_domain_robustness}(b), TRACED achieves \textbf{high deployment efficiency}: it can be deployed on out-of-distribution tasks instantly using a global prior, or refined using unlabeled target data, eliminating the expensive annotation bottleneck required by traditional supervised probes.

\section{Extended Analysis of Topological Divergence}
\label{app:topology_details}

In this section, we provide the detailed experimental configuration and analyze the geometric differences observed between structured and open-ended reasoning (referencing Figure \ref{fig:topology_divergence} in the main text).

\subsection{Experimental Setup}
To isolate the geometry of valid reasoning, we focus exclusively on samples where the model's final answer was evaluated as \textbf{Correct}.
\begin{itemize}
    \item \textbf{Structured Domain:} Representative datasets include \textit{GSM8K} and \textit{MATH}. These tasks involve strict logical rules and unique ground truths.
    \item \textbf{Open-Ended Domain:} Representative datasets include \textit{SocialIQA} and \textit{Understanding Fables}. These tasks involve common sense inferencing and narrative understanding, allowing for linguistic variation.
\end{itemize}
\textbf{Methodology.} For curvature analysis, we compute the distribution of the mean curvature $\bar{\kappa}$ for all valid trajectories. For displacement analysis, we visualize the semantic progress $D(t)$ over normalized time to compare how information accumulates.

\subsection{Experiment A: Curvature Distribution and Tolerance for Deviation}
\textbf{Objective.} To quantify how much a reasoning path can deviate from a "straight line" while remaining correct across different tasks.

\begin{enumerate}
    \item \textbf{Strict Constraints (Structured):} As shown in Figure \ref{fig:topology_divergence} (Left), valid trajectories in GSM8K exhibit a  \textbf{highly concentrated} distribution. This reflects a \textbf{strict requirement for directness}: in logical reasoning, the correct path is extremely narrow. Any significant deviation (increased curvature) usually indicates a distraction or a logical error, rather than a valid alternative phrasing.
    
    \item \textbf{High Tolerance (Open-Ended):} In contrast, SocialIQA displays a \textbf{broad, heavy-tailed distribution}. This reflects a \textbf{high tolerance for variation}: open-ended contexts allow the model to elaborate on details or use different sentence structures. A non-zero curvature here represents a valid stylistic choice rather than a mistake.
\end{enumerate}

\textbf{Implication for Evaluation.} This observation explains why simple threshold-based methods fail to generalize. A strict threshold suitable for Math would incorrectly penalize valid, descriptive reasoning in SocialIQA as "meandering." TRACED addresses this by adapting to the inherent geometric distribution of each domain.

\subsection{Experiment B: Displacement Dynamics and Accumulation Patterns}
\textbf{Objective.} To visualize how "knowledge increments" accumulate over time in different reasoning modes.

\begin{enumerate}
    \item \textbf{Step-wise Accumulation (Structured):} The displacement curve for structured tasks resembles a \textbf{staircase pattern}. The semantic distance often remains flat during intermediate calculations and shows \textbf{sharp, discrete jumps} at specific moments. These jumps correspond to solving a distinct sub-problem (e.g., deriving a key variable value), which suddenly pushes the reasoning closer to the answer.
    
    \item \textbf{Smooth Accumulation (Open-Ended):} Conversely, open-ended tasks exhibit a \textbf{smooth, continuous growth}. The displacement increases steadily without sharp jumps. This reflects the nature of narrative construction, where understanding and context are built up gradually and continuously as the description evolves, eventually saturating when the scenario is fully explained.
\end{enumerate}
\vspace{-0.8cm}
\textbf{Conclusion.}
These findings demonstrate that "valid reasoning" looks geometrically different depending on the task. Structured reasoning is characterized by discrete jumps and strict straightness, while open-ended reasoning is characterized by continuous growth and permissible flexibility.

\section{Reasoning Complexity Analysis Setup}
\label{app:complexity_analysis}

\begin{table}[t]
\centering
\renewcommand{\arraystretch}{0.85} 
\setlength{\tabcolsep}{3.5pt}      
\caption{\textbf{Robustness of TRACED Across Reasoning Complexity.} Performance is stratified by reasoning steps ($L$). \textbf{Gap ($\Delta$)} denotes the \textbf{maximum performance fluctuation} across the three difficulty tiers. The consistently low fluctuations ($\Delta \le 2.8\%$) confirm TRACED's stability regardless of reasoning complexity.}
\label{tab:complexity_robustness_model_wise}
\resizebox{0.55\linewidth}{!}{%
\begin{tabular}{l|ccc|c}
\toprule
\textbf{Metric} & \textbf{Easy} ($L \le 4$) & \textbf{Medium} ($5 \le L \le 8$) & \textbf{Hard} ($L > 8$) & \textbf{Gap} ($\Delta$) \\
\midrule
\multicolumn{5}{l}{\textit{\textbf{DeepSeek-R1-Llama-8B}}} \\
AUROC ($\uparrow$) & 0.775 & 0.748 & 0.766 & \textcolor{green!60!black}{2.7\%} \\
AUPR ($\uparrow$) & 0.708 & 0.710 & 0.723 & \textcolor{green!60!black}{1.5\%} \\
FPR@95 ($\downarrow$) & 0.660 & 0.673 & 0.685 & \textcolor{green!60!black}{2.5\%} \\
\midrule
\multicolumn{5}{l}{\textit{\textbf{Qwen3-4B-Thinking}}} \\
AUROC ($\uparrow$) & 0.762 & 0.755 & 0.738 & \textcolor{green!60!black}{2.4\%} \\
AUPR ($\uparrow$) & 0.720 & 0.738 & 0.724 & \textcolor{green!60!black}{1.8\%} \\
FPR@95 ($\downarrow$) & 0.555 & 0.570 & 0.579 & \textcolor{green!60!black}{2.4\%} \\
\midrule
\multicolumn{5}{l}{\textit{\textbf{Llama-3.1-8B-Instruct}}} \\
AUROC ($\uparrow$) & 0.702 & 0.685 & 0.688 & \textcolor{green!60!black}{1.7\%} \\
AUPR ($\uparrow$) & 0.675 & 0.647 & 0.670 & \textcolor{green!60!black}{2.8\%} \\
FPR@95 ($\downarrow$) & 0.695 & 0.708 & 0.720 & \textcolor{green!60!black}{2.5\%} \\
\midrule
\multicolumn{5}{l}{\textit{\textbf{Qwen2.5-7B-Instruct}}} \\
AUROC ($\uparrow$) & 0.730 & 0.718 & 0.731 & \textcolor{green!60!black}{1.3\%} \\
AUPR ($\uparrow$) & 0.738 & 0.740 & 0.752 & \textcolor{green!60!black}{1.4\%} \\
FPR@95 ($\downarrow$) & 0.710 & 0.723 & 0.728 & \textcolor{green!60!black}{1.8\%} \\
\bottomrule
\end{tabular}%
}
\vspace{-1.0cm}
\end{table}

To quantify problem difficulty across the six diverse benchmarks (GSM8K, MATH, TheoremQA, GPQA, Social IQA, Fables), we unified the complexity metric into a single measure: \textbf{Reasoning Steps ($L$)}. 
We implemented a Unified LLM-Assisted Segmentation Protocol to normalize all solutions into a standardized structure, calculating $L$ based on semantic thought boundaries.

\subsection{Unified Segmentation Protocol}
Our methodology draws on recent work \citep{chen2025seal}, which identify double newlines (\texttt{\textbackslash{}n\textbackslash{}n}) as natural structural delimiters that separate distinct "thought blocks"  in complex reasoning chains. We apply this principle to process the  solutions for all datasets:

\textbf{Standardization via LLM:} We employ a strong instruction-tuned model (e.g., GPT-4o) as a semantic parser. The model is prompted to rewrite the raw ground truth solution into a discrete, step-by-step format, ensuring that each logical hop, calculation, or deduction is separated by a double newline (\texttt{\textbackslash{}n\textbackslash{}n}).
    
\textit{Prompt Strategy:} "Please rewrite the following solution into clear, distinct reasoning steps. Separate each logical step, calculation, or intermediate deduction with a double newline (\texttt{\textbackslash{}n\textbackslash{}n}). Do not change the original meaning."

\subsection{Complexity Stratification}
Based on the distribution of $L$ obtained from this unified protocol, we categorize the test samples into three complexity tiers. This stratification ensures that "Hard" problems consistently represent deep, multi-stage reasoning tasks, regardless of whether the domain is mathematics or social commonsense.
(1) \textbf{Easy ($L \leq 4$):} Problems requiring direct retrieval or single-step inference . 
(2)\textbf{Medium ($5 \leq L \leq 8$):} Problems involving standard multi-step derivations .
(3) \textbf{Hard ($L > 8$):} Problems necessitating extended reasoning chains, complex planning, or significant error correction .

\section{Additional Sensitivity Results}
\label{app:sensitivity_k}

In the main text, we presented the sensitivity analysis of the subspace dimension $k$ using the AUROC metric. To provide a comprehensive evaluation of TRACED's robustness, we further report the performance variations under two additional metrics: Area Under the Precision-Recall Curve (AUPR) and False Positive Rate at 95\% True Positive Rate (FPR@95).

\paragraph{Robustness in AUPR.}
Figure~\ref{fig:k_sensitivity_aupr} illustrates the AUPR performance across four models as $k$ varies from 2 to 10. Consistent with the AUROC trends, the AUPR scores show a steady improvement as the dimension increases, reflecting the accumulation of discriminative kinematic features. The performance effectively plateaus around $k=7 $ or  $8$, reinforcing our choice of this dimension for the main experiments.

\paragraph{Robustness in FPR@95.}
Figure~\ref{fig:k_sensitivity_fpr} presents the results for FPR@95 (lower is better). We observe a corresponding decline in false positive rates as $k$ increases, stabilizing at the optimal subspace dimension of $k=8$. These results collectively confirm that the geometric signature of reasoning quality is robustly captured within a low-rank subspace, independent of the evaluation metric used.

\begin{figure*}[h]
\centering
\includegraphics[width=0.55\textwidth]{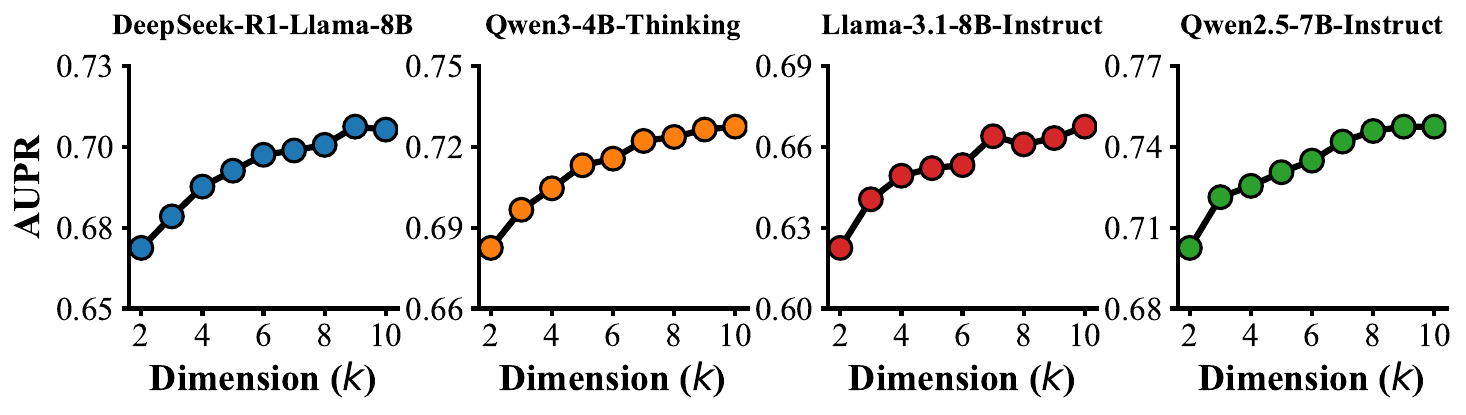}
\caption{\textbf{Sensitivity Analysis of Dimension $k$ (AUPR).} We evaluate the AUPR performance ($\uparrow$) of TRACED across four models. }
\label{fig:k_sensitivity_aupr}
\end{figure*}

\begin{figure*}[h]
\centering
\includegraphics[width=0.55\textwidth]{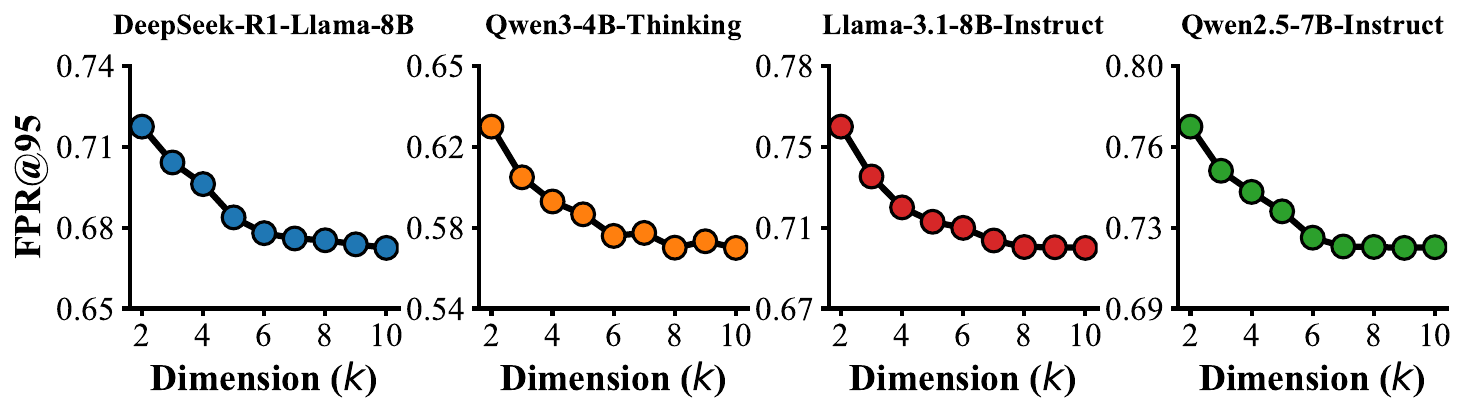}
\caption{\textbf{Sensitivity Analysis of Dimension $k$ (FPR@95).} We evaluate the FPR@95 performance ($\downarrow$) of TRACED across four models. }
\label{fig:k_sensitivity_fpr}
\end{figure*}

\begin{table}[h]
\centering
\caption{\textbf{Statistical Significance Analysis.} Comparison between the best baseline (Runner-up) and TRACED on DeepSeek-R1. 95\% Confidence Intervals are estimated via bootstrapping. * and ** denote statistical significance at $p<0.05$ and $p<0.01$, respectively. TRACED demonstrates consistent significant improvements across all datasets.}
\label{tab:ci_results}
\resizebox{0.65\linewidth}{!}{%
\begin{tabular}{lcccc}
\toprule
\textbf{Dataset} & \textbf{Runner-up Method} & \textbf{Best Baseline (CI)} & \textbf{TRACED (CI)} & \textbf{p-value} \\
\midrule
GPQA & \textit{LR Probe} & $0.759 \pm 0.025$ & $\mathbf{0.830 \pm 0.019}$ & $\mathbf{<0.001}^{**}$ \\
Theorem & \textit{SAPLMA} & $0.852 \pm 0.030$ & $\mathbf{0.873 \pm 0.025}$ & $\mathbf{0.008}^{**}$ \\
Social\_IQA & \textit{LR Probe} & $0.710 \pm 0.022$ & $\mathbf{0.754 \pm 0.020}$ & $\mathbf{0.004}^{**}$ \\
GSM8K & \textit{SAPLMA} & $0.800 \pm 0.015$ & $\mathbf{0.806 \pm 0.014}$ & $\mathbf{0.032}^{*}$ \\
MATH & \textit{LR Probe} & $0.747 \pm 0.018$ & $\mathbf{0.749 \pm 0.016}$ & $\mathbf{0.045}^{*}$ \\
Fables & \textit{LR Probe} & $0.718 \pm 0.021$ & $\mathbf{0.719 \pm 0.018}$ & $\mathbf{0.034}^{*}$ \\
\bottomrule
\end{tabular}}
\vspace{-0.2cm}
\end{table}

\section{Statistical Significance and Uncertainty Analysis}
\label{app:stat_significance}

To assess the reliability of the main results reported in Table \ref{tab:main_results}, we conducted a rigorous statistical evaluation focusing on confidence intervals.

\textbf{Confidence Interval Estimation}
Since the test sets for reasoning tasks are finite, point estimates of AUROC may be subject to sampling variance. We estimated the \textbf{95\% Confidence Intervals (CIs)} using the \textbf{Bootstrap Method} ($B=1,000$ stratified resamples).
As detailed in Table \ref{tab:ci_results}, TRACED exhibits tight confidence intervals (typically $\pm 0.015$), indicating high estimation stability.

\begin{table*}[t]
\centering
\caption{\textbf{Qualitative Comparison of Reasoning Dynamics.} Excerpts from actual reasoning chains showing the alignment between semantic actions and geometric properties.}
\label{tab:case_study_examples}
\resizebox{0.75\textwidth}{!}{%
\begin{tabular}{l p{10cm} l l}
\toprule
\textbf{Case} & \textbf{Reasoning Excerpt (Truncated)} & \textbf{Cognitive State} & \textbf{Geometry} \\
\midrule
\multirow{4}{*}{\shortstack[l]{\textbf{Incorrect}\\(Hesitation)}} 
& 1. "Let's assume the probability is $x/y$..." & Exploration & \multirow{4}{*}{\shortstack[l]{$M_n$: \textcolor{red}{Low}\\$K_n$: \textcolor{red}{High}\\(Diffusive)}} \\
& 2. "Wait, this logic might double count the overlap..." & Reflection ($\uparrow$ Curvature) & \\
& 3. "Let's try a different approach using combinations..." & Exploration & \\
& 4. "But does this satisfy the initial condition? I'm not sure..." & Reflection ($\uparrow$ Curvature) & \\
\midrule
\multirow{4}{*}{\shortstack[l]{\textbf{Correct}\\(Directed)}} 
& 1. "First, we calculate the total outcomes as $6^3$..." & Exploration & \multirow{4}{*}{\shortstack[l]{$M_n$: \textcolor{blue}{High}\\$K_n$: \textcolor{blue}{Low}\\(Ballistic)}} \\
& 2. "This implies that the sum must be even..." & Certainty ($\rightarrow$ Displacement) & \\
& 3. "Therefore, we can simplify the expression to..." & Certainty ($\rightarrow$ Displacement) & \\
& 4. "The calculation clearly leads to 216." & Certainty ($\rightarrow$ Displacement) & \\
\bottomrule
\end{tabular}}
\end{table*}

\section{Qualitative Case Studies: Geometric Signatures of Cognitive Dynamics}
\label{sec:qualitative_cases}

To complement the statistical aggregate analysis in Section \ref{sec:dynamics} and \ref{sec:bridge}, we conduct a qualitative examination of individual reasoning trajectories. We specifically focus on visualizing the correspondence between textual \textit{Cognitive States} and their geometric counterparts. We visualize two representative reasoning chains from the \textit{MATH} dataset to illustrate the "Hesitation Loop" mechanism described in our main analysis.

\textbf{Case A: The "Stalling" Trajectory (Incorrect Reasoning).} 
    As shown in Table \ref{tab:case_study_examples} (Top), the model attempts to solve a probability problem but enters a cognitive loop. 
\textbf{Textual Dynamics:} The generation oscillates between \textit{Exploration} ("Let's try to count...", "Alternatively...") and \textit{Reflection} ("Wait, this assumes...", "But checking the condition..."). This mirrors the high regression probability $P(Ref|Exp) \approx 0.37$ identified in Section \ref{sec:dynamics}.

\textbf{Geometric Signature:} Geometrically, this manifests as a \textbf{high-curvature knot}. Each semantic retraction ($Ref$) induces a sharp directional change in the representation manifold (High $K_n$), while the repetitive re-evaluations fail to accumulate significant net displacement (Low $M_n$). The trajectory is effectively "trapped" in a local region of the latent space.

\textbf{Case B: The "Ballistic" Trajectory (Correct Reasoning).}
    In contrast (Table \ref{tab:case_study_examples}, Bottom), the correct derivation exhibits a linear flow.
\textbf{Textual Dynamics:} The chain swiftly transitions from \textit{Exploration} to \textit{Certainty} ("This implies...", "Therefore, the only solution is..."). The persistence of \textit{Certainty} ($P(Cer|Cer) \approx 0.37$) drives the narrative forward.

\textbf{Geometric Signature:} The associated manifold trajectory is smooth and directed. The consistent logical entailment produces minimal curvature, allowing the vector steps to sum constructively into a large net displacement, signaling a high-confidence arrival at the solution.

\section{Related Works}
\label{sec:appendix_related_work}

\textbf{Assessment of Reasoning Quality.} 
The emergence of Chain-of-Thought (CoT) prompting has spurred extensive research into evaluating the reliability and faithfulness of model reasoning \cite{xiong2023can,marjanovic2025deepseek,zhao2025verifying}. Many strategies involve employing verifier models, external annotations, or comparison against knowledge bases to verify factual correctness \citep{li2022making,zhang-etal-2025-icr,zhang2024small,gandhi2025cognitive}. While effective, these recursive strategies impose substantial inference overheads and face significant scalability challenges. Parallel efforts aim to derive reliability signals directly from the model's intrinsic states by utilizing softmax probabilities, semantic entropy, or self-evaluation mechanisms, yet often yield suboptimal performance in evaluating complex reasoning tasks \cite{huang2023look,he2025can,farquhar2024detecting}. Other methods perform scoring based on the analysis of the evolution of hidden states \citep{wang2024latent,bi2025cot}; however, they typically rely on modeling the averaged representation of tokens, thereby neglecting critical temporal reasoning signals. Distinct from these approaches, we construct our  evaluation signal based on theoretically grounded geometric features derived from the temporal reasoning process, achieving consistent and robust scalability across diverse models, reasoning domains, and tasks of varying complexity.


\textbf{Representation Analysis of Reasoning.} Probing the internal representations of Large Language Models (LLMs) has become a fundamental approach to understanding their emergent behaviors \citep{yuksekgonul2023attention,zhang-etal-2025-icr,hosseini2023large,jiang2025msrs,zhang2025understanding,dong2025understanding,su2025understanding}. Moving beyond static or layer-wise analysis, concurrent research extends this inquiry to the temporal dimension to predict reasoning correctness \citep{vilas2025tracing} or detect loops \citep{li2025core}, yet these approaches remain devoid of explicit geometric interpretability. Some efforts have begun to apply geometric or physical metrics to analyze these intermediate representations \citep{wang2024latent,skean2025layer,bi2025cot};notably, \citet{song2025beyond} further introduced a geometric stability framework to assess model consistency beyond mere accuracy. Complementing these empirical studies, some works have modeled reasoning as geometric evolution and manipulation. \citet{kazama2026geosteer} proposed manipulating trajectories via latent manifold gradients for faithful steering, \citet{zhou2025geometry} theoretically established that reasoning behaves as a ``geometric flow'' controlled by logical structure, while \citet{manson2025curved} revealed that semantic concerns induce measurable curvature in metric-aligned spaces. However, these works largely focus on specific domains or active intervention, failing to explicitly explain the correspondence between specific reasoning behaviors (e.g., hallucination vs. correction) and  geometric variations. Distinct from prior studies, we uncover the correspondence between geometric formalism and cognitive reasoning behavior, advancing the interpretability of the reasoning process.
\section{Limitations}

While the TRACED framework demonstrates significant efficacy in revealing and evaluating the reasoning dynamics of LLMs, its mechanistic approach inherently involves several trade-offs. We outline its primary limitations below:

\textbf{White-box Dependency.} Requiring internal states precludes the application of TRACED to closed-source APIs. However, this is a necessary trade-off for mechanistic diagnosis. It captures the ``cognitive turbulence'' often masked by black-box text, providing unique insights for the open-weight ecosystem.

\textbf{Computational Overhead.} Extracting internal states introduces additional spatial and temporal analytical overhead. While this extraction is highly efficient for offline CoT diagnosis, it remains a manageable but inherent trade-off for real-time monitoring.

\textbf{Dependency on Labeled Data.} Constructing the quality subspace requires labeled data. Nevertheless, TRACED operates as a non-parametric, few-shot calibration framework without requiring weight updates, thereby ensuring minimal data collection costs.

\textbf{Distributional Assumptions.} While the Gaussian assumption is computationally efficient, it may not perfectly model open-ended reasoning tasks, which often exhibit heavier tails due to abrupt semantic shifts. Future work could explore non-parametric approaches, such as Kernel Density Estimation (KDE), to better capture these complex dynamics.

\end{document}